%% file: main.tex
\documentclass{article}

\usepackage[preprint]{neurips_2025}
\usepackage{amsmath,amssymb,algorithm,algorithmic}
\usepackage{enumitem} 
\usepackage{graphicx}
\usepackage{subcaption}    % sub-figures with their own labels
\usepackage{adjustbox}

\usepackage{amsthm} 
\theoremstyle{plain}
\newtheorem{proposition}{Proposition}

\usepackage[utf8]{inputenc} % allow utf-8 input
\usepackage[T1]{fontenc}    % use 8-bit T1 fonts
\usepackage{hyperref}       % hyperlinks
\usepackage{url}            % simple URL typesetting
\usepackage{booktabs}       % professional-quality tables
\usepackage{amsfonts}       % blackboard math symbols
\usepackage{nicefrac}       % compact symbols for 1/2, etc.
\usepackage{microtype}      % microtypography
\usepackage{xcolor}         % colors

\title{Post-Hoc Guidance for Consistency Models by Joint Flow Distribution Learning}

\author{%
  Chia-Hong Hsu \\
  Brown University\\
  \texttt{chia\_hong\_hsu@brown.edu} \\
  \And
  Randall Balestriero \\
  Brown University\\
  \texttt{randall\_balestriero@brown.edu} \\ 
}

\begin{document}

\maketitle

\input{content/abstract}
\input{content/introduction}
\input{content/background}
\input{content/method}

\input{content/experiments}

\input{content/discussion}

%%%%%%%%%%%%%%%%%%%%%%%%%%%%%%%%%%%%%%%%%%%%%%%%%%%%%%%%%%%%

%\begin{ack}

% Acknowledgement section -> suppressed at submission.

%\end{ack}

%%%%%%%%%%%%%%%%%%%%%%%%%%%%%%%%%%%%%%%%%%%%%%%%%%%%%%%%%%%%

\bibliographystyle{plain}
\bibliography{references}

%%%%%%%%%%%%%%%%%%%%%%%%%%%%%%%%%%%%%%%%%%%%%%%%%%%%%%%%%%%%
\newpage
\appendix

\input{content/appendix}

%%%%%%%%%%%%%%%%%%%%%%%%%%%%%%%%%%%%%%%%%%%%%%%%%%%%%%%%%%%%

% \input{content/neurips_checklist}
\end{document}

%% file: content/abstract.tex
\begin{abstract}
Classifier-free Guidance (CFG) lets practitioners trade-off fidelity against diversity in Diffusion Models (DMs). The practicality of CFG is however hindered by DMs sampling cost. On the other hand, Consistency Models (CMs) generate images in one or a few steps, but existing guidance methods require knowledge distillation from a separate DM teacher, limiting CFG to Consistency Distillation (CD) methods. We propose Joint Flow Distribution Learning (JFDL), a lightweight alignment method enabling guidance in a pre-trained CM. With a pre-trained CM as an ordinary differential equation (ODE) solver, we verify with normality tests that the variance-exploding noise implied by the velocity fields from unconditional and conditional distributions is Gaussian. In practice, JFDL equips CMs with the familiar adjustable guidance knob, yielding guided images with similar characteristics to CFG. Applied to an original Consistency Trained (CT) CM that could only do conditional sampling, JFDL unlocks guided generation and reduces FID on both CIFAR-10 and ImageNet 64×64 datasets. This is the first time that CMs are able to receive effective guidance post-hoc without a DM teacher, thus, bridging a key gap in current methods for CMs.
\end{abstract}

%% file: content/introduction.tex
\section{Introduction}
\label{sec:intro}
      
Diffusion models (DMs) have emerged as a powerful class of generative models, achieving remarkable success in various domains of artificial intelligence \cite{ho2020denoisingdiffusionprobabilisticmodels,
song2020generativemodelingestimatinggradients,      
song2021scorebasedgenerativemodelingstochastic,      
nichol2021improveddenoisingdiffusionprobabilistic,      
song2022denoisingdiffusionimplicitmodels,      
rombach2022high,podell2023sdxlimprovinglatentdiffusion, lin2023magic3dhighresolutiontextto3dcontent,sauer2024adversarial,karras2024analyzingimprovingtrainingdynamics}. Their ability to generate high-fidelity samples has been demonstrated in tasks such as text-to-image synthesis, speech synthesis, and video generation \cite{zhang2023addingconditionalcontroltexttoimage, ho2022video, xing2024survey, huang2022fastdiff}. These models operate through an iterative denoising process, where they gradually transform a noisy input into a structured data sample \cite{ho2020denoisingdiffusionprobabilisticmodels,luo2022understandingdiffusionmodelsunified}. The field of DMs has seen extensive research in areas like denoising schedulers, network architectures, controllability, and distillation techniques aimed at improving their performance and efficiency \cite{karras2022elucidatingdesignspacediffusionbased,karras2024analyzingimprovingtrainingdynamics,lu2022dpm,lu2022dpm_plus,salimans2022progressive,meng2023distillation,peebles2023scalable,huang2022prodiff,wimbauer2024cachecanacceleratingdiffusion}.

Classifier-Free Guidance (CFG) \cite{ho2022classifier} is a widely adopted technique that allows for controlling the generation process in DMs. CFG involves jointly training a conditional score and an unconditional score, by often using a “null’’ label for the unconditional case \cite{ho2022classifier,bradley2024classifier}. A key advantage of CFG is its simplicity and post-hoc nature, as the guidance effect is realized during inference without particularly learning a guided path for generation \cite{ho2022classifier,sadat2024no}. By interpolating between the conditional and unconditional predictions using a guidance scale, CFG enables a trade-off between the fidelity of the generated samples and their diversity \cite{karras2024guiding}. Due to its effectiveness in controllability, CFG has become a standard technique in DM applications, with guidance often preferred over unguided generation \cite{sadat2024no,yuan2023physdiff,hsu2025beyond,karras2024guiding,meng2023distillation,chung2024cfgmanifoldconstrainedclassifierfree,luo2023latentconsistencymodelssynthesizing}. 

Consistency Models (CMs) are a new family of generative models designed for fast sampling, often generating high-quality images in just one or a few steps \cite{song2023consistency,song2023improved,kim2023consistency,kim2024generalized,zheng2024trajectory,wang2024stableconsistencytuningunderstanding,lee2025truncatedconsistencymodels,geng2024consistencymodelseasy,li2025bidirectionalconsistencymodels,dai2024motionlcmrealtimecontrollablemotion,lu2025simplifyingstabilizingscalingcontinuoustime,yang2024consistencyflowmatchingdefining,he2024consistencydiffusionbridgemodels,wang2024stableconsistencytuningunderstanding}. They achieve this speed by learning to directly map noisy inputs to clean data samples through two main approaches \cite{song2023consistency}: Consistency Distillation (CD) \cite{li2025bidirectionalconsistencymodels,luo2023latentconsistencymodelssynthesizing,zheng2024trajectory,li2024connecting,fei2024musicconsistencymodels}, which involves distilling knowledge from a pre-trained DM, and Consistency Training (CT), where the training procedure is totally data-driven \cite{song2023improved,li2025bidirectionalconsistencymodels,geng2024consistencymodelseasy,dao2025improvedtrainingtechniquelatent,hsu2025beyond}. Existing guidance methods for CMs typically rely on knowledge distillation of CFG from a teacher DM, which inherently binds guidance to the CD approach for CMs \cite{luo2023latentconsistencymodelssynthesizing,chen2024pixartdeltafastcontrollableimage}. In DMs, the benefits of CFG can be directly evaluated at inference time using the same model \cite{karras2024guiding}. For distilled CMs, however, guidance effects are inherently tied to training, making the learning task more complex than simply fitting an unguided ordinary differential equation (ODE) trajectory \cite{luo2023latentconsistencymodelssynthesizing, zheng2024trajectory}. This makes it challenging to directly evaluate the benefits of guidance since it would require training a separate unguided version for a fair comparison. To conclude, current guidance approaches for CMs are largely dependent on the existence of a DM teacher, and the isolated impact of guidance on CMs remains unclear without a direct comparison to another CM that only does unguided generation.

% Paragraph 4, the paragraph that talks about our work. 
In this paper, we introduce a novel post-hoc guidance method for CMs that operates independently of DMs. Starting with a pre-trained unguided CM as an ODE solver of the diffusion path, our method enables guidance learning by interpolating the directions of synthesized unconditional and conditional distributions. We call this \textbf{Joint Flow Distribution Learning (JFDL)} and provide insights of why it works based on the connection to Flow-based generative models (FMs) \cite{Kobyzev_2021,lipman2023flowmatchinggenerativemodeling,yang2024consistencyflowmatchingdefining,gao2025diffusionmeetsflow,liu2022flowstraightfastlearning,liu2022rectifiedflowmarginalpreserving,frans2024stepdiffusionshortcutmodels,kim2024simplereflowimprovedtechniques}. Furthermore, we discovered that our algorithm can be adapted to work effectively without an unconditional ODE solver. This broadens the generality of our method for pre-trained CMs as training them can be a complex endeavor. Our algorithm offers a post-hoc way to equip an unguided CM with guidance capabilities, effectively bridging the gap for CT models. We access the effectiveness of guidance by showing significant FID improvements on both CIFAR-10 \cite{cifar10} and ImageNet 64x64 datasets \cite{5206848} compared to the initial unguided CM. In summary, our contributions are:
\begin{itemize}
\item We propose JFDL, a novel post-hoc guidance method for CMs that does not rely on DMs.
\item We provide insights to JFDL's effectiveness from a FM's perspective.
\item We demonstrate that a pre-trained CM without explicit design for unconditional sampling, remain effective for JFDL and guidance tuning for CMs. 
\item We demonstrate FID improvements on CIFAR-10 and ImageNet 64x64 by applying our method to CT models.
\end{itemize} 

%% file: content/background.tex
\section{Preliminaries}

This section introduces the prelimiaries for the connection between Diffusion models (DMs) and Flow-based models (FMs), Consistency Models (CMs), and Classifier-free Guidance (CFG). We also reuse the established notations for the rest of the paper. 

\paragraph{Diffusion Models and Flow-based Models, Two Sides of the Same Coin.} DMs are probabilistic generative models that define a forward process which gradually adds noise to a data sample $\mathbf{x}_0 \sim p_{\text{data}}(\mathbf{x})$ over time $t \in [0,1] $. For the rest of our paper, we focus on the Variance Exploding (VE) scheme \cite{song2021scorebasedgenerativemodelingstochastic,karras2022elucidatingdesignspacediffusionbased} that has the stochastic differential equation (SDE), \(\text{d} \mathbf{x}_t = \sqrt{2 \sigma_t }\text{d}\mathbf{w}_t\), where $\mathbf{w}_t$ is a standard Wiener process, and $\sigma_t=\sigma_\text{max}t$ is the noise exploding term with $ \sigma_\text{max} \gg 1 $. The reverse process--generating data from noise--can also be described by an SDE \cite{song2021scorebasedgenerativemodelingstochastic}. Notably, there exists a Probability Flow ODE (PF-ODE), that shares the same marginal probability densities as the reverse process, $ \text{d} \mathbf{x}_t / \text{d}\sigma_t = -\sigma_t \nabla_{\mathbf{x}_t}\log p_t(\mathbf{x}_t) = 1 / \sigma_t \cdot ( \mathbf{x}_t - \mathbb{E}_{\mathbf{x}_0} \left[ \mathbf{x}_0 | \mathbf{x}_t \right] ) $, where $\nabla_{\mathbf{x}_t}\log p_t(\mathbf{x}_t)$ is called the score function \cite{kim2023consistency,song2023consistency}. A common loss function for predicting the expected value of $ \mathbf{x}_0 $ has the form:
\begin{equation}
\mathcal{L}_{DM}(\theta) = \mathbb{E}_{t, \mathbf{x}_0, \mathbf{x}_t} \left[ w(t) \| D_\theta(\mathbf{x}_t, t) - \mathbf{x}_0 \|_2^2 \right], 
\end{equation}
where $w(t)$ is a weighting function dependent on time, and $\theta$ parametrizes the model $D_\theta$ \cite{karras2022elucidatingdesignspacediffusionbased}. Other training objectives, such as predicting score, noise, or even a mixture of noise and $ \mathbf{x}_0 $, can be formulated similarly by reparameterizating the loss term \cite{luo2022understandingdiffusionmodelsunified,gao2025diffusionmeetsflow,lu2025simplifyingstabilizingscalingcontinuoustime}. 

FMs learn a map between an untractable distribution $p_0(\mathbf{x}) $, e.g., $p_\text{data}(\mathbf{x})$, and a simple distribution $p_1(\mathbf{x})$ , e.g., the standard normal \cite{Kobyzev_2021,lipman2023flowmatchinggenerativemodeling,liu2022rectifiedflowmarginalpreserving}. We define an ODE with the time dependent vector field $u_t$, and the flow $\phi_t$ by:
$\text{d} \phi_t(\mathbf{x}) / \text{d}t  = u_t (\phi_t(\mathbf{x})), \ \phi_0(\mathbf{x})=\mathbf{x}_0 $. Since we do not have access to $u_t$ that satisfies the marginal densities $p_t$, a per sample aggregation of \textit{conditional vector fields} (CVF) $ u_t(\mathbf{x}_t | \mathbf{x}_0) $ can be used to construct the \textit{conditional flow matching} (CFM) objective \cite{lipman2023flowmatchinggenerativemodeling} as follows: 
\begin{equation}
\mathcal{L}_{CFM}(\theta) = \mathbb{E}_{t, \mathbf{x}_0, \mathbf{x}_t } \left[ \| F_\theta(\mathbf{x}_t, t) - u_t(\mathbf{x}_t | \mathbf{x}_0) \|_2^2 \right] , 
\end{equation}
where $\mathbf{x}_t \sim p_t(\mathbf{x} | \mathbf{x}_0)$, and $\theta$ parametrizes the model $F_\theta$. If we consider the VE scheme from diffusion, we can construct a \textit{probability path} $p_t(\mathbf{x} | \mathbf{x}_0) = \mathcal{N}(\mathbf{x}; \mu_t(\mathbf{x}_0), \sigma_t^2\mathbf{I})$, where $\mu_t(\mathbf{x}) = \mathbf{x}, \sigma_t=\sigma_\text{max}t$, and the CVF has the analytical form \cite{lipman2023flowmatchinggenerativemodeling}: 
\begin{equation}
\label{eq:cvf}
u_t(\mathbf{x}_t | \mathbf{x}_0) = \frac{\sigma_t'}{\sigma_t} (\mathbf{x}_t - \mu_t(\mathbf{x}_0)) + \mu_t'(\mathbf{x}_0). 
\end{equation}
With adjustments in parametrization and weighting, previous works have shown that the training target of DMs and FMs are translatable \cite{gao2025diffusionmeetsflow,lu2025simplifyingstabilizingscalingcontinuoustime}. We will explicitly show the connection of $u_t(\mathbf{x}_t | \mathbf{x}_0)$ and CMs in Section \ref{sec:method-overview}.

\paragraph{Consistency Models.} CMs are a class of generative models designed for one or few-step sampling by learning to directly map any point on the PF-ODE trajectory to its endpoint (the data sample) \cite{song2023consistency,kim2023consistency}. In our study, we focus on a family of continuous-time CT models introduced in ECT \cite{geng2024consistencymodelseasy} due to its significantly reduced GPU resources at training. Based on the VE diffusion scheme, the ECT objective is as follows:
\begin{equation}
\mathcal{L}_{ECT}(\theta) = \mathbb{E}_{t, r, \mathbf{x}_0, \mathbf{z}} \left[ \ d(G_\theta(\mathbf{x}_0+\sigma_t\mathbf{z}, t), \  G_{\theta^-}(\mathbf{x}_0+\sigma_r\mathbf{z}, r))\  \right],
\end{equation}
where $\mathbf{x}_0 \sim p_\text{data}$ is a data sample, $\mathbf{z} \sim \mathcal N(\mathbf{0}, \mathbf{I})$ is a sampled noise, $t,r$ are consecutive time steps, and $d(\cdot,\cdot)$ is a distance function. $G_\theta$ is the online CM, where $\theta^{-}$ denotes the stop gradient of the same CM being the target. The distance $\Delta:=|\sigma_t-\sigma_r|$ is approximately $\sigma_t/q^\text{n}$, where n is an integer denoting the training stage. With $q=2$ and $q=4$ at the start of CIFAR-10 and ImageNet 64x64 experiments respectively, $\Delta$ progressively decreases by a factor of $q$ at training, and the factor at the final stage for both is $q^{n} = 256$.

\paragraph{Classifier-Free Guidance in DMs and CMs.} CFG is a technique used to controlling the guidance of the generation process \cite{ho2022classifier,luo2023latentconsistencymodelssynthesizing}. In DMs, this is done nearly training-free by extrapolating the conditional and unconditional updates at sampling $D^{\text{CFG}}_{\theta}(\mathbf{x}_t, t, c, \omega) = \omega \cdot \frac{[\mathbf{x}_t-D_\theta(\mathbf{x}_t,t,c)]}{\sigma_t} + (1-\omega) \cdot \frac{[\mathbf{x}_t - D_\theta(\mathbf{x}_t,t,\emptyset)]}{\sigma_t} $, with $\omega \ge 1$, 
and $c,\emptyset$ are the conditional and unconditional classes respectively \cite{ho2022classifier}. To achieve CFG in CMs, previous works rely on knowledge distillation of a CFG teacher DM, $D^\text{CFG}_\theta$, at training, limiting CFG to CD models \cite{zheng2024trajectory,luo2023latentconsistencymodelssynthesizing}. We will demonstrate a DM-free CFG method for CMs which uses the pre-trained CM itself as an ODE solver. As a result, we can access the benefits of CFG post-hoc compared to the original CM and equip CT models with guidance. 

%% file: content/method.tex
\section{Guidance via Joint-Flow Distribution Learning}
\label{sec:method}
We begin by discussing the connection of FMs with CMs. Following, we introduce the naive JFDL for post-hoc guidance tuning, which has a prerequisite of a pre-trained class-conditioned CM that learns the unconditional PF-ODE (with the $\emptyset$ class). We will provide a theoretical analysis of the pseudo-noise in JFDL, supported by experimental verification. In the end, we find out that an adjusted JFDL algorithm works surprisingly well without the need of the $\emptyset$ solver from CM. 

\begin{algorithm}[t]
  \caption{Naive JFDL for Post-Hoc Guidance}
  \label{alg:naive-jfdl}
  \begin{algorithmic}[1]
  
    \STATE \textbf{Input:} dataset $\mathcal{D}$, pre-trained CM $\psi$, weighting function $w(t)$, timesteps sampling density $p(t,r)$, total iterations $\mathrm{totalIters}$, max guidance scale $\omega_\text{max}$, gradnorm layer $\theta_\text{gn}$, gradnorm function $f(L_\text{ECT}, L_\text{JFDL} \ | \ \theta_\text{gn})$.
    
    \STATE \textbf{Init:} $\theta \leftarrow \psi$, $\mathrm{Iters} \leftarrow 0$.
    
    \WHILE{$\mathrm{Iters} < \mathrm{totalIters}$}
    
      \STATE Sample $(\mathbf{x}_0^c, c) \sim \mathcal{D}$, $t,r \sim p(t,r)$, $\mathbf{z} \sim \mathcal{N}(\mathbf{0},\mathbf{I})$, $\omega=1$;
      
      \STATE $\mathbf{x}_t^c \leftarrow \mathbf{x}_0^c + \sigma_t \mathbf{z}$; $\mathbf{x}_r^c \leftarrow \mathbf{x}_0^c + \sigma_r \mathbf{z}$; 
             
      \STATE $L_\text{ECT}(\theta) \leftarrow 
      w(t) \ d \bigl (G_{\theta}(\mathbf{x}_t^c, t, c, \omega), \ G_{\theta^-}(\mathbf{x}_r^c, r, c, \omega) \bigr)$; \hfill $\triangleright$ ECT loss

      \STATE Sample $\mathbf{y}_0^c \leftarrow G_{\psi^-}(\mathbf{x}_t^c, t, c)$, $\mathbf{y}_0^{\emptyset,t} \leftarrow G_{\psi^-}(\mathbf{x}_t^c, t, \emptyset)$, $\mathbf{z'} \sim \mathcal{N}(\mathbf{0},\mathbf{I})$, $\omega \sim \mathcal{U}(1,\omega_\text{max})$;

      \STATE $\mathbf{y}_t^c \leftarrow \mathbf{y}_0^c + \sigma_t \mathbf{z'}$; $\mathbf{y}_r \leftarrow \mathbf{y}_t^c + \{ \omega \left[ \frac{\mathbf{y}_t^c-\mathbf{y}_0^c}{\sigma_t} \right] + (1-\omega) \left[ \frac{\mathbf{y}_t^c-\mathbf{y}_0^{\emptyset,t}}{\sigma_t} \right] \} \cdot (\sigma_r-\sigma_t) $; 

      \STATE $L_\text{JFDL}(\theta) \leftarrow 
      w(t) \ d \bigl (G_{\theta}(\mathbf{y}_t^c, t, c, \omega), \ G_{\theta^-}(\mathbf{y}_r, r, c, \omega) \bigr)$; \hfill $\triangleright$ JFDL loss

      \STATE $\lambda_\text{gn} = f(L_\text{ECT}, L_\text{JFDL} \ | \ \theta_\text{gn})$;
      \STATE $L(\theta) = L_\text{ECT}(\theta) + \lambda_\text{gn} L_\text{JFDL}(\theta)$;
              
      \STATE $\theta \gets \theta - \eta\nabla_\theta L(\theta)$;
      \STATE $\mathrm{Iters} \leftarrow \mathrm{Iters}+1$;
    \ENDWHILE
    \STATE \textbf{return} $\theta$ 
  \end{algorithmic}
\end{algorithm}

% laplace distribution outliers, papers??
\subsection{Routine Correspondence between FMs and CMs}
\label{sec:method-overview}
Following the VE scheme, let $p_0(\mathbf{x})=p_\text{data}(\mathbf{x})$ be the data distribution, where $\mathbf{x} \in \mathbb{R}^d$, and let $p_0(\mathbf{x}|c)=p_\text{data}(\mathbf{x}|c)$ be the class-conditioned data distribution, where $c \in \mathcal{C}$, s.t. $p_0(\mathbf{x})=\int p_0(\mathbf{x}|c)p(c)\ \text{d}c$. 

To train a FM, we begin by sampling $\mathbf{x}_0^c \sim p_\text{data}(\mathbf{x}|c)$ and $\mathbf{z} \sim \mathcal{N}(\mathbf{0},\mathbf{I})$. We can construct a class probability path $p_t(\mathbf{x}|c)$ with $p_t(\mathbf{x}|\mathbf{x}_0^c,c) = \mathcal{N}(\mathbf{x};\mathbf{x}_0^c,\sigma_t^2\mathbf{I})$, satisfying $p_t(\mathbf{x}|c) = \int p_t(\mathbf{x}|\mathbf{x}_0^c,c)p_0(\mathbf{x}_0^c|c) \ \text{d}\mathbf{x}_0^c$. Observe that $p_1(\mathbf{x}|\mathbf{x}_0^c, c) \approx \mathcal{N}(\mathbf{x};\mathbf{0},\sigma_\text{max}^2\mathbf{I}) $ is Gaussian, and $p_0(\mathbf{x}|\mathbf{x}_0^c,c)$ is the class-conditioned data distribution \textit{conditioned on $\mathbf{x}_0^c$}, we can analytically derive the CVF, $u_t(\mathbf{x}_t|\mathbf{x}_0^c,c)$, as the target of the CFM objective. To train a CM, we choose a $t$, and noisify the sampled data, $\mathbf{x}^c_t = \mathbf{x}^c_0 + \sigma_tz$. Then, we can compute the denoising direction as $\mathbf{u} := 1/t \cdot (\mathbf{x}^c_t - \mathbf{x}^c_0)$, and derive the target for the consistency loss as $G_{\theta^-}(\mathbf{x}_t+\mathbf{u} \cdot(\sigma_r-\sigma_t),r,c)$. 

We highlight the correspondence of the routines between FMs and CMs. Under the VE scheme, both CM and FM learns a mapping between the data distribution $p_0$ and Gaussian $p_1$. The probability path in FM is equivalent to the noisifying step in CM. Also, we can derive that the CVF is $\sigma_\text{max}\cdot\mathbf{z}$ following \eqref{eq:cvf}, which equates to the denoising direction in CM. Hence, even without a DM teacher, the ECT objective recovers an accurate approximation of the PF-ODE vector field, just as the CFM objective lets FM estimate the data-marginal velocity field. As we introduce JFDL, we will provide insights from these correspondences into how it effectively captures the mapping associated with the unconditional class distribution.

\paragraph{Joint-Flow Distribution Learning.} In the variance–exploding (VE) setting, we assume the existence of a \textit{perfect ODE solver},
\begin{equation}
  \mathrm{Solver}\colon \ 
  \mathbb{R}^{d}\times[0,1]^{2}\times\mathcal{C}
  \longrightarrow
  \mathbb{R}^{d}, 
  \quad
  (\mathbf{x},t,s,c)
  \longmapsto
  \mathrm{Solver}_{t\to s}^{c}(\mathbf{x}),
\end{equation}
where the shorthand $\mathrm{Solver}_{t\to s}^{c}(\mathbf{x})
:=\mathrm{Solver}(\mathbf{x},t,s,c)$ will be used throughout. For class $c \in \mathcal{C},$ time indices $t,s\in[0,1]$, $\mathrm{Solver}$ satisfies the push-forward condition
\begin{equation}
\bigl(\mathrm{Solver}_{t\!\to s}^{\,c}\bigr)_{\#}\,
  p_{t}(\mathbf{x}\mid c)
  \;=\;
  p_{s}(\mathbf{x}\mid c),
\end{equation}

so that it deterministically transports the class-conditioned distribution from noise level \(t\) to level \(s\). Similar to CFG, we introduce a special “null’’ label
$\emptyset\in\mathcal{C}$.  
Choosing $c=\emptyset$ collapses the conditional path to the
\emph{unconditional} path: $p_t(\mathbf{x}|\emptyset):=p_t(\mathbf{x})
  \ \Longrightarrow \ (\mathrm{Solver}_{t\to s}^{\emptyset})_{\#}p_t(\mathbf{x})=p_s(\mathbf{x})$,
Thus, the same solver acts as a one-step denoiser (when $s=0$) for \textit{both} the class-conditioned and
$\emptyset$ trajectories. With access to an ideal $\mathrm{Solver}$, we introduce \textit{Joint-Flow Distribution Learning} (JFDL) that constructs a pair of denoising directions, $\mathbf{u}^{\text{cls}}$ and $\mathbf{u}^{\emptyset}$, to embody CFG for the consistency loss. The routine is:

\begin{enumerate}[leftmargin=*,label=(\roman*)]
\item \textbf{Draw sample.} Draw a class image $\mathbf{x}_0^{c}\sim p_{0}(\mathbf{x}|c)$ and choose a source noise index $t$.
\item \textbf{Compute unconditional anchor.} Define
        $\mathbf{y}_{0}^{\emptyset,t}
        :=\mathrm{Solver}_{t\to 0}^{\emptyset}\bigl(
          \mathrm{Solver}_{0\to t}^{c}(\mathbf{x}_0^{c})\bigr)$.
\item \textbf{Noisify.} Add noise by $\mathbf{x}_{t}^{c}=\mathbf{x}_0^{c}+\sigma_{t}\mathbf{z}$ with $\mathbf{z}\sim\mathcal{N}(\mathbf{0},\mathbf{I})$. 
\item \textbf{Compute denoising vectors.} Define $\mathbf{u}^{\text{cls}} :=\dfrac{\mathbf{x}_{t}^{c}-\mathbf{x}_0^{c}}{\sigma_{t}}$ and $\mathbf{u}^{\emptyset} :=\dfrac{\mathbf{x}_{t}^{c}-\mathbf{y}_{0}^{\emptyset,t}}{\sigma_{t}}$.
\item \textbf{Extrapolate guidance.} Extrapolate with $\omega$, define $\mathbf{u}^{\omega}:=\omega\,\mathbf{u}^{\text{cls}}+(1-\omega)\,\mathbf{u}^{\emptyset}$.
\item \textbf{Project to lighter noise for consistency loss.} Pick $r<t$ and let
        $\mathbf{x}_{r}
        =\mathbf{x}_{t}^{c}+(\sigma_{r}-\sigma_{t})\,\mathbf{u}^{\omega}$. Compute loss as $d\bigl(
          G_{\theta}(\mathbf{x}_{t}^{c},t,c,\omega),\;
          G_{\theta^{-}}(\mathbf{x}_{r},r,c,\omega)
        \bigr)$.
\end{enumerate}

% font size larger, keep left y-axis, keep bottom x-axis.
\begin{figure}[t]
  \centering
  \includegraphics[width=\linewidth]{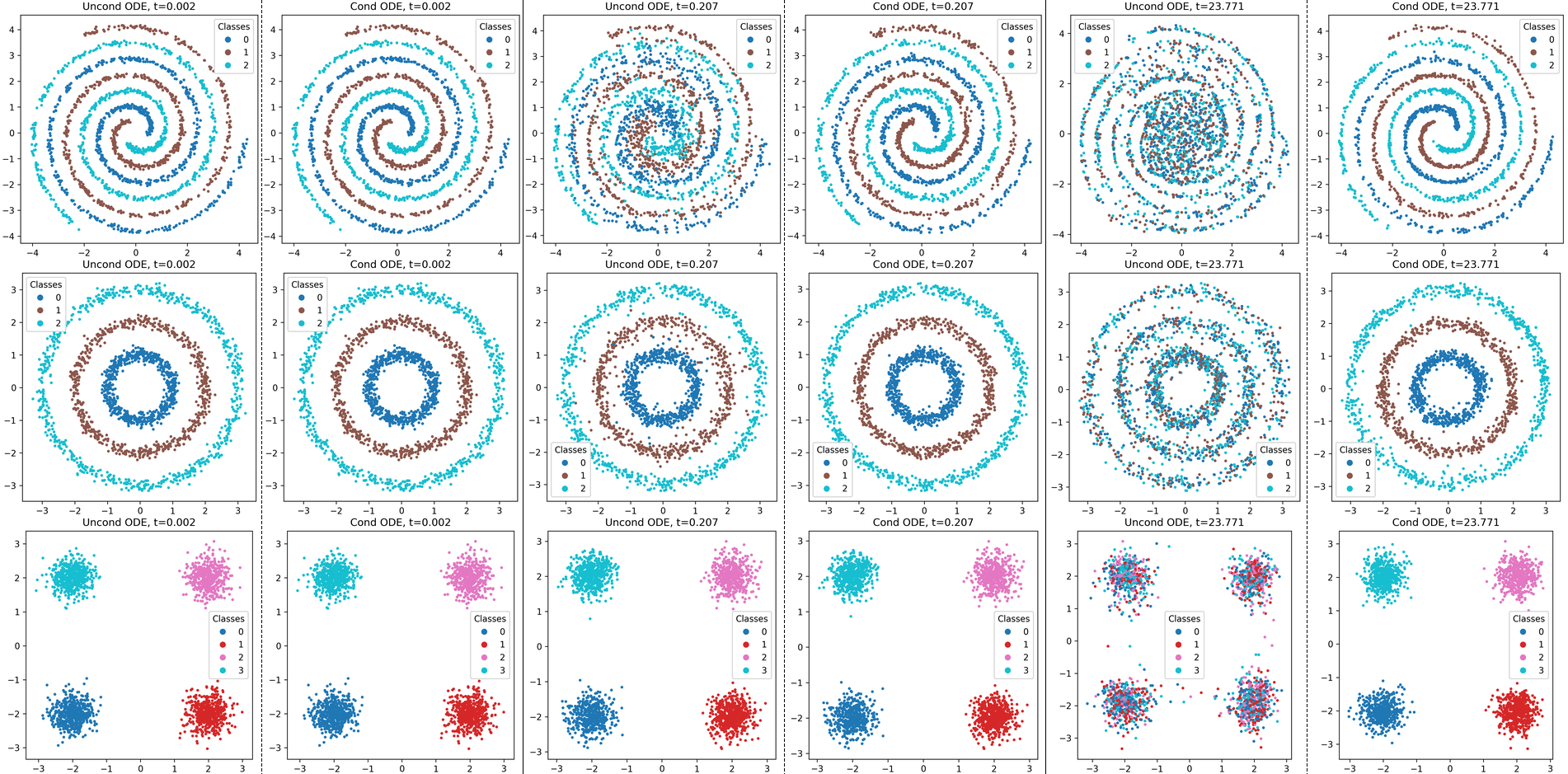}
  \caption{
    \textbf{Comparison of class v.s. $\emptyset$-conditioned ODE solutions and their marginal distributions.}
    Each row corresponds to a different 2-D toy dataset, \textit{spiral}, \textit{circle}, \textit{Gaussian blob}. We choose three different time steps, $\sigma_t=0.002, 0.207, 23.771$ (from left to right), and compare the distribution of a hybrid flow $p(\mathbf{y}_{0}^{\emptyset,t})$ (\textbf{left}) with the class distribution $p(\mathbf{x}_0^c|c)$ (\textbf{right}) separated by dotted lines. In Prop. \ref{prop:hybrid-distribution}, we state that the marginal of the two are the same.
  }
  \label{fig:toy-dist}
\end{figure}

% See figure on page 5
When $\omega=1$, the extrapolated direction $\mathbf{u}^\omega = \mathbf{u}^\text{cls}$ is no different than the denoising direction for unguided CMs. One one hand, for small noise index $t$, the \textit{hybrid} ODE solution $\mathbf{y}^{\emptyset,t}_0 \approx \mathbf{x}_0^c$, while for large $t$, the difference is increased, suggesting the distribution deviates more. On the other hand, for any chosen $t$, the distribution of the hybrid ODE solution integrated over $\mathcal{C}$ is always the data distribution $p_0(\mathbf{x})$. See illustration of Fig.\ref{fig:toy-dist}. Formally, we state that:

\begin{proposition}[Hybrid flow preserves the marginal data distribution]
\label{prop:hybrid-distribution}
Fix any noise index \(t\in[0,1]\).  
Draw a class label \(c\sim p(c)\) and a clean sample
\(\mathbf{x}_{0}^{c}\sim p_{0}(\mathbf{x}\mid c)\).
Define the hybrid ODE solution $\mathbf{y}_{0}^{\emptyset,t}:=\mathrm{Solver}_{t\to 0}^{\emptyset}\Bigl(\mathrm{Solver}_{0\to t}^{c}(\mathbf{x}_{0}^{c})\Bigr).$ We denote the density of $\mathbf{y}_{0}^{\emptyset,t}$ as $p(\mathbf{y}|c)$, which we define as:
\begin{equation}
p(\mathbf{y}\mid c):=\bigl(\mathrm{Solver}_{t\to0}^{\emptyset}\bigr)_{\#}\bigl(\mathrm{Solver}_{0\to t}^{c}\bigr)_{\#}p_{0}(\mathbf{x}\mid c).
\end{equation}

Then the marginal law of \(\mathbf{y}_{0}^{\emptyset,t}\) over the class prior \(p(c)\) coincides with the unconditional-class data distribution:
\begin{equation}
    \int_{\mathcal{C}} p(c)\,p(\mathbf{y}\mid c)\,\text{d}c=p_0(\mathbf{x}).
\end{equation}
\end{proposition}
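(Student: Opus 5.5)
The plan is to push the class-conditioned data distribution forward through the two solver maps in turn, using the push-forward condition assumed for the perfect $\mathrm{Solver}$. Linearity of push-forward with respect to mixtures then lets us exchange the integral over $\mathcal{C}$ with the unconditional map $\mathrm{Solver}^{\emptyset}_{t\to0}$. That map does not depend on $c$, and this independence is the key structural fact.

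\textbf{Step 1: the inner map.} By the push-forward condition with $s=t$ and starting level $0$, we have $(\mathrm{Solver}^{c}_{0\to t})_{\#}p_0(\mathbf{x}\mid c)=p_t(\mathbf{x}\mid c)$ for each fixed $c$. Hence
\begin{equation*}
p(\mathbf{y}\mid c)=\bigl(\mathrm{Solver}^{\emptyset}_{t\to0}\bigr)_{\#}\,p_t(\mathbf{x}\mid c).
\end{equation*}
We should note that the condition is stated for general $t,s\in[0,1]$, which includes the forward direction $0\to t$. So this step is a direct invocation. Alternatively, one can realise it concretely as the noisification $\mathbf{x}_0^c+\sigma_t\mathbf{z}$, whose law is $p_t(\mathbf{x}\mid c)$ by construction of the class probability path.

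\textbf{Step 2: mixing over classes.} Push-forward by a fixed measurable map $T$ is linear on measures: $\int p(c)\,T_{\#}\mu_c\,\mathrm{d}c=T_{\#}\bigl(\int p(c)\mu_c\,\mathrm{d}c\bigr)$. To verify this, test both sides against a bounded measurable $f$ and apply Fubini, since $\int f\,\mathrm{d}T_{\#}\mu_c=\int f\circ T\,\mathrm{d}\mu_c$. Applying it with $T=\mathrm{Solver}^{\emptyset}_{t\to0}$, which is independent of $c$, gives
\begin{equation*}
\int_{\mathcal{C}}p(c)\,p(\mathbf{y}\mid c)\,\mathrm{d}c=\bigl(\mathrm{Solver}^{\emptyset}_{t\to0}\bigr)_{\#}\int_{\mathcal{C}}p(c)\,p_t(\mathbf{x}\mid c)\,\mathrm{d}c .
\end{equation*}

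\textbf{Step 3: identifying the mixture.} We need $\int p(c)\,p_t(\mathbf{x}\mid c)\,\mathrm{d}c=p_t(\mathbf{x})$. Write $p_t(\mathbf{x}\mid c)=\int\mathcal{N}(\mathbf{x};\mathbf{x}_0,\sigma_t^2\mathbf{I})\,p_0(\mathbf{x}_0\mid c)\,\mathrm{d}\mathbf{x}_0$ and swap the integrals by Tonelli. Then use $p_0(\mathbf{x}_0)=\int p_0(\mathbf{x}_0\mid c)p(c)\,\mathrm{d}c$. The result is the Gaussian convolution of $p_0$, which is exactly the unconditional marginal $p_t(\mathbf{x})=p_t(\mathbf{x}\mid\emptyset)$. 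Finally, the push-forward condition for the null label gives $(\mathrm{Solver}^{\emptyset}_{t\to0})_{\#}p_t=p_0$, which concludes the proof.

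The main obstacle is conceptual rather than computational. We must make sure the definition $p_t(\mathbf{x}\mid\emptyset):=p_t(\mathbf{x})$ agrees with the class mixture in Step 3; this consistency is exactly what the class-marginalisation $p_0(\mathbf{x})=\int p_0(\mathbf{x}\mid c)p(c)\,\mathrm{d}c$ and the shared Gaussian kernel guarantee. Some care is also needed in Step 2 with measurability of $(c,\mathbf{x})\mapsto p_t(\mathbf{x}\mid c)$ so that Fubini applies. If $\mathcal{C}$ includes $\emptyset$ itself in the support of $p(c)$, the argument is unchanged, because $p_t(\cdot\mid\emptyset)=p_t$ is already consistent with the mixture.
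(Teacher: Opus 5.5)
Your proposal is correct and follows essentially the same route as the paper. Both arguments identify the inner push-forward as the class-noised marginal, pull the $c$-independent map $\mathrm{Solver}^{\emptyset}_{t\to0}$ outside the class mixture (the paper via preimages $G^{-1}(A)$, you via test functions and Fubini), recognize $\int p(c)\,p_t(\mathbf{x}\mid c)\,\mathrm{d}c=p_t(\mathbf{x})$ (your Gaussian-convolution check just spells out what the paper calls the law of total probability), and finish with the null-label push-forward condition.
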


We provide the proof of Prop. \ref{prop:hybrid-distribution} in Appendix \ref{sec:appendix-proofs}. To match JFDL with the FM routine, the denoising direction $\mathbf{u}^\emptyset$ points to the data distribution $p_0(\mathbf{x})$, for any $t \in [0,1]$. We can further establish its probability path $p_\tau(\mathbf{x}|\mathbf{y}_{0}^{\emptyset,t})$ to be,
\begin{equation}
\begin{aligned}
\label{eq:prob-path}
\mathcal{N}(\mathbf{x}; \mu_\tau(&\mathbf{y}_{0}^{\emptyset,t}),\sigma_\tau^2\mathbf{I}) , \text{ where }\ \mu_\tau(\mathbf{y}_{0}^{\emptyset,t})=\mathbf{y}_{0}^{\emptyset,t}+\frac{\mathbf{x}_0^c - \mathbf{y}_{0}^{\emptyset,t}}{t}  \cdot \tau , \ \sigma_\tau = \sigma_\text{max}\cdot \tau, 
\end{aligned}
\end{equation}
such that when $\tau=t$, it coincides with $\mathcal{N}(\mathbf{x};\mathbf{x}_0^c,\sigma_t^2\mathbf{I})$, i.e., \textit{joining} the denoising direction from $p_0(\mathbf{x}|\mathbf{x}_0^c,c)$ at $\mathbf{x}_t^c$. So far, we are left with matching the $p_1$ distribution. If we can show that $p_1(\mathbf{x}|\mathbf{y}_{0}^{\emptyset,t}) \approx \mathcal{N}(\mathbf{0},\sigma_\text{max}^2\mathbf{I})$ following the probability path $p_\tau(\mathbf{x}|\mathbf{y}_{0}^{\emptyset,t})$ in \eqref{eq:prob-path}, then with Prop. \ref{prop:hybrid-distribution}, we can imply that the denoising direction $\mathbf{u}^\emptyset$ in JFDL is the CVF from Gaussian to the $\emptyset$-class data distribution. Consequently, JFDL achieves CFG like guidance. 

\subsection{Pseudo-Noise Analysis and Experimental Verification}

In this section, our goal is to show that $p_1(\mathbf{x}|\mathbf{y}_{0}^{\emptyset,t})$ approximates $\mathcal{N}(\mathbf{0},\sigma_\text{max}^2\mathbf{I})$. In other words, for any $t$, we sample $\mathbf{x}_0^c \sim p_\text{data}(\mathbf{x}|c)$ and compute $\mathbf{y}_{0}^{\emptyset,t}$ to be the solution of its hybrid ODE following the JFDL routine. Evaluating the probability path in \eqref{eq:prob-path} at $\tau=1$ gives the random variable,
\begin{equation}\label{eq:pseudo-noise}
\overbrace{%
  \underbrace{%
    \mathbf{y}_{0}^{\emptyset,t}
    \;+\;
    \frac{\mathbf{x}_{0}^{c}-\mathbf{y}_{0}^{\emptyset,t}}{t}%
  }_{\text{mixed-signals}}
  \;+\;
  \sigma_{\max}\,\mathbf{z}%
}^{\text{pseudo-noise}},
\qquad
\mathbf{z}\sim\mathcal{N}(\mathbf{0},\mathbf{I}).
\end{equation}
which we will show how $\sigma_\text{max}\mathbf{z}$ dominates analytically. Then, we will verify the Gaussianity of \eqref{eq:pseudo-noise} experimentally on 2D toy datasets and CIFAR-10 with the signal-to-noise ratio and Gaussianity tests.

% p-values extra in the appendix.
\begin{figure}[t]
\centering
\includegraphics[width=\linewidth]{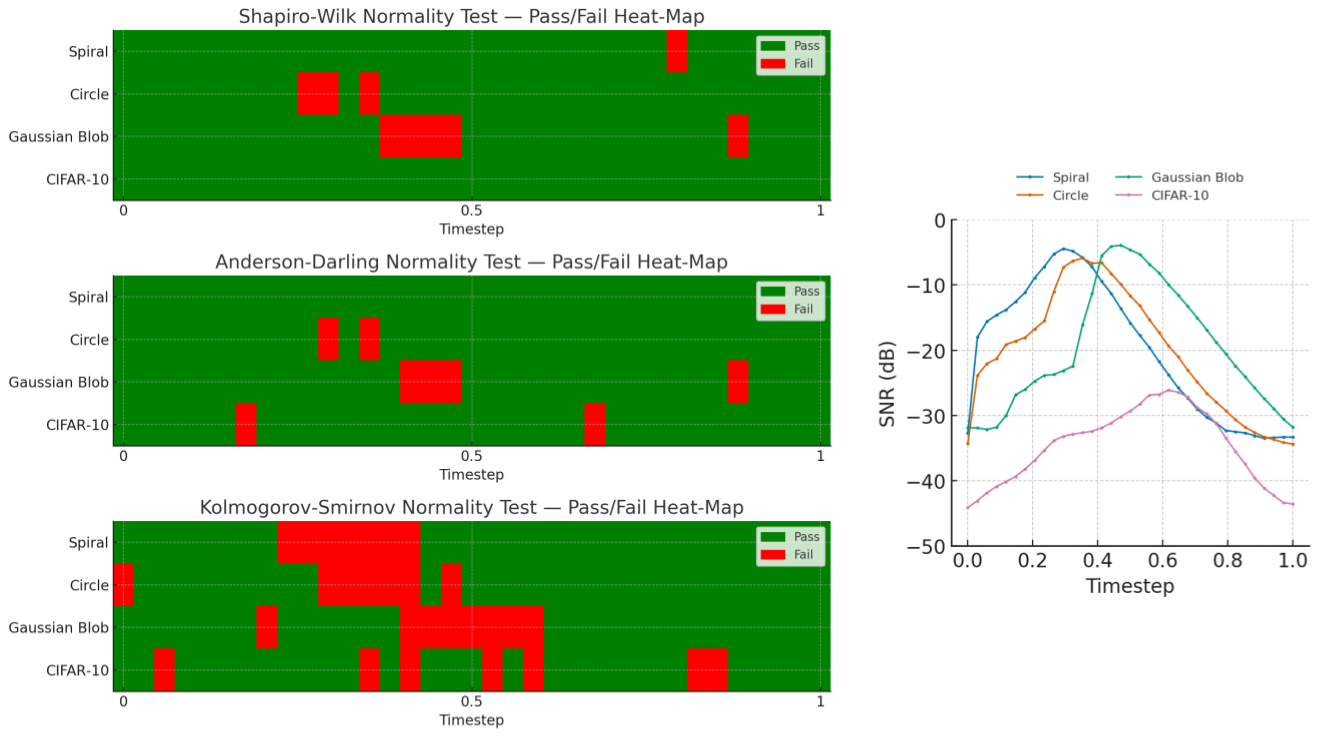}
\caption{\textbf{Normality of pseudo-noise across timesteps.} 
Heat-maps show pass (green) or fail (red) at $\alpha=0.05$ for \textbf{(top)} Shapiro–Wilk, \textbf{(middle)} Anderson–Darling, and \textbf{(bottom)} Kolmogorov–Smirnov tests. Rows correspond to the four datasets, \textit{spiral, circle, Gaussian blob, CIFAR-10}. With only a handful of isolated rejections, as well as extremely low SNR ratio, the pseudo-noise is effectively Gaussian at almost every $t$, supporting the normality assumption.}
\label{fig:normality}
\end{figure}

\paragraph{Theoretical analysis.}
Without loss of generality, the clean variables $\mathbf{y}_{0}^{\emptyset,t}$ and $\mathbf{x}_{0}^{c}$ are centered at zero and confined to the hyper-cube $[-1,1]^{d}$. With a VE schedule, we can make $\sigma_{\max}$ arbitrarily large, so for any fixed $t\gg0$ the Gaussian term $\sigma_{\max}\mathbf{z}$ in \eqref{eq:pseudo-noise} already dominates, forcing the conditional law at $\tau=1$ to be close to $\mathcal{N}(\mathbf{0},\sigma_{\max}^{2}\mathbf{I})$.

The non-trivial regime emerges in the limit $t \to 0$, where the deterministic drift term $\mu_1(t) := \mathbf{y}_{0}^{\emptyset,t} + [\mathbf{x}_{0}^{c} - \mathbf{y}_{0}^{\emptyset,t}]/t$ becomes comparable in magnitude. To analyze its contribution, we introduce a function $f(t) := \mathbf{y}_{0}^{\emptyset,t}$ representing the time-dependent evolution of the hybrid flow, and express the mixed-signals as follows:

\begin{proposition}[Mixed-signals as a function of $t$]\label{prop:pseudo-noise}
Let $f(t) := \mathbf{y}_{0}^{\emptyset,t} = \mathrm{Solver}_{t\to 0}^{\emptyset}\!\bigl(\mathrm{Solver}_{0\to t}^{c}(\mathbf{x}_0^{c})\bigr)$ denote the hybrid flow, we can define the mixed-signals $g(t)$ as:
\begin{equation}
    g(t) := f(0) + \frac{f(0) - f(t)}{t}.
\end{equation}
then, from Taylor expansion to the 2nd order, we have:
\begin{equation}\label{eq:taylor}
    g(t) \approx f(0)+\frac{t}{2}f''(0) = \mathbf{x}_0^c + \frac{\sigma_\text{max}t}{2}\nabla_{\mathbf{x}_0^c}\log p(c|\mathbf{x}_0^c). 
\end{equation}
\end{proposition}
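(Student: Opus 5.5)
The plan is to expand the hybrid flow $f(t)$ to second order in $t$ around $t=0$ and then substitute that expansion into the definition of $g(t)$. First, a purely algebraic step. Writing $f(t)=f(0)+t f'(0)+\tfrac{t^{2}}{2}f''(0)+O(t^{3})$ gives
\begin{equation*}
\frac{f(0)-f(t)}{t}=-f'(0)-\frac{t}{2}f''(0)+O(t^{2}).
\end{equation*}
So the statement reduces to two facts about the hybrid flow: that $f'(0)=0$, and an explicit identification of $f''(0)$ in terms of the class-posterior gradient. Since $f(0)=\mathrm{Solver}^{\emptyset}_{0\to0}\circ\mathrm{Solver}^{c}_{0\to0}(\mathbf{x}_0^c)=\mathbf{x}_0^c$, the zeroth-order term is already correct.

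To obtain $f'(0)$ and $f''(0)$, I would use the PF-ODE from the Preliminaries written in the noise level, $\mathrm{d}\mathbf{x}/\mathrm{d}\sigma=-\sigma\nabla_{\mathbf{x}}\log p_\sigma(\mathbf{x}\mid c)$. For the conditional leg I would integrate forward from $\sigma=0$ to $\sigma_t$. For the $\emptyset$ leg I would integrate backward from $\sigma_t$ to $0$ with $\nabla\log p_\sigma(\mathbf{x})$. Assuming the smoothed scores are $C^{1}$ in $(\mathbf{x},\sigma)$ near $\sigma=0$, the drift vanishes linearly in $\sigma$. This gives
\begin{equation*}
\mathbf{x}_t=\mathbf{x}_0^c-\tfrac{\sigma_t^{2}}{2}\nabla\log p_0(\mathbf{x}_0^c\mid c)+O(\sigma_t^{3}),\qquad
f(t)=\mathbf{x}_t+\tfrac{\sigma_t^{2}}{2}\nabla\log p_0(\mathbf{x}_t)+O(\sigma_t^{3}).
\end{equation*}
Replacing $\nabla\log p_0(\mathbf{x}_t)$ by $\nabla\log p_0(\mathbf{x}_0^c)$ only costs $O(\sigma_t^{4})$. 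Combining the two expansions,
\begin{equation*}
f(t)-\mathbf{x}_0^c=-\tfrac{\sigma_t^{2}}{2}\bigl[\nabla\log p_0(\mathbf{x}_0^c\mid c)-\nabla\log p_0(\mathbf{x}_0^c)\bigr]+O(\sigma_t^{3}).
\end{equation*}
By Bayes' rule, $p(\mathbf{x}\mid c)=p(c\mid\mathbf{x})p(\mathbf{x})/p(c)$, and the bracket is exactly $\nabla_{\mathbf{x}_0^c}\log p(c\mid\mathbf{x}_0^c)$.

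With $\sigma_t=\sigma_{\max}t$, the expansion of $f(t)-\mathbf{x}_0^c$ has no linear term, so $f'(0)=0$. Its quadratic coefficient gives $f''(0)=-\sigma_{\max}^{2}\nabla\log p(c\mid\mathbf{x}_0^c)$. Plugging into the first display,
\begin{equation*}
g(t)\approx\mathbf{x}_0^c-\tfrac{t}{2}f''(0)=\mathbf{x}_0^c+\tfrac{\sigma_{\max}^{2}t}{2}\nabla\log p(c\mid\mathbf{x}_0^c).
\end{equation*}
This has the same structure as \eqref{eq:taylor}, with two differences that I would reconcile against the paper's conventions rather than in the proof itself.
\begin{itemize}
\item The signs differ. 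The middle expression in \eqref{eq:taylor}, $f(0)+\tfrac{t}{2}f''(0)$, has the opposite sign from what the first display gives.
\item The prefactors differ. I get $\sigma_{\max}^{2}$ where \eqref{eq:taylor} has $\sigma_{\max}$.
\end{itemize}
The sign difference most likely comes from whether $g(t)$ is read off with $f(0)-f(t)$ or with $\mathbf{x}_0^c-\mathbf{y}_0^{\emptyset,t}$. The power of $\sigma_{\max}$ depends on how $\sigma_t$ and the score are normalised; in the literal VE setting $\sigma_t=\sigma_{\max}t$ I expect the square.

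The main obstacle is justifying the Taylor expansion of the solver near $t=0$. The PF-ODE drift involves $\nabla\log p_\sigma$, which may be singular as $\sigma\to0$ if the data lie on a low-dimensional set, and the two legs use different scores. I would first assume that $p_0(\cdot\mid c)$ and $p_0$ are smooth and positive near $\mathbf{x}_0^c$. Under that assumption a Gr\"onwall-type bound gives the $O(\sigma_t^{3})$ remainders uniformly, and the cancellation of the $\nabla\log p_0$ contributions from the two legs leaves only the posterior gradient. The result should be stated as a formal second-order approximation under this regularity assumption, which is consistent with the ``$\approx$'' in the proposition.
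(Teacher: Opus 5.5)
Your argument is correct, and it reaches the result by a more elementary route than the paper's.

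\textbf{How the routes differ.} The paper differentiates $f$ at a general $t$. By the chain rule it splits $f'(t)$ into two pieces: the Jacobian term $J_t\,\mathrm{d}\mathbf{x}_t^c/\mathrm{d}t$, with $J_t=D_{\mathbf{x}}\mathrm{Solver}^{\emptyset}_{t\to0}$, and the start-time derivative $\partial_t\mathrm{Solver}^{\emptyset}_{t\to0}$. This gives $f'(t)=-\sigma_t J_t\nabla\log p(c\mid\mathbf{x}_t^c)$. It then derives the variational equation $\mathrm{d}J_t/\mathrm{d}t=\sigma_t D^2_{\mathbf{x}}\log p(\mathbf{x}_t^c)\,J_t$ with $J_0=I$, and differentiates once more to get a closed form for $f''(t)$. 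Only then does it set $t=0$, where $\sigma_0=0$ and $J_0=I$ leave $f'(0)=0$ and $f''(0)=-\sigma_{\max}\nabla\log p(c\mid\mathbf{x}_0^c)$.

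You instead expand each leg to second order in integral form and compose them. Neither the Jacobian nor the start-time derivative is ever needed, since at this order only $J_0=I$ matters. You also get an explicit remainder via Gr\"onwall. The paper's route gives exact formulas for $f'$ and $f''$ at every $t$; for example, $f'(t)$ is the posterior gradient transported by the backward Jacobian, though only the values at $0$ are used. Your route is shorter and more transparent about the error term.

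\textbf{The $\sigma_{\max}$ versus $\sigma_{\max}^2$ discrepancy.} Your $\sigma_{\max}^2$ is the correct constant for the Preliminaries' PF-ODE $\mathrm{d}\mathbf{x}/\mathrm{d}\sigma=-\sigma\nabla\log p_\sigma$ with $\sigma_t=\sigma_{\max}t$. The paper's $\sigma_{\max}$ comes from its appendix writing the ODE in the time variable as $\mathrm{d}\mathbf{x}_s/\mathrm{d}s=-\sigma_s\nabla\log p(\mathbf{x}_s)$. That form drops the factor $\mathrm{d}\sigma_s/\mathrm{d}s=\sigma_{\max}$. If you run your expansion with that drift, you get $\int_0^t\sigma_{\max}s\,\mathrm{d}s=\sigma_{\max}t^2/2$ and reproduce the stated constant exactly.

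\textbf{The sign discrepancy.} Your proposed explanation cannot be right: $f(0)-f(t)$ and $\mathbf{x}_0^c-\mathbf{y}_0^{\emptyset,t}$ are the same vector by definition. The paper's own computation gives $g(t)\approx f(0)-\tfrac{t}{2}f''(0)$, exactly as yours does. Its right-hand side $\mathbf{x}_0^c+\tfrac{\sigma_{\max}t}{2}\nabla\log p(c\mid\mathbf{x}_0^c)$ is consistent with that. The middle expression $f(0)+\tfrac{t}{2}f''(0)$ in the statement is simply a sign typo.

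\textbf{Two minor points.} First, the paper's proof actually uses the mixed signal $g(t)=f(t)+\frac{f(0)-f(t)}{t}$ rather than the statement's $f(0)+\frac{f(0)-f(t)}{t}$. The two differ by $f(t)-f(0)=O(t^2)$, so the leading term is unaffected. Second, the remainder for $g$ should be $o(t)$, as in your $O(t^2)$, not the $o(t^2)$ the paper writes.
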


The resulting Taylor expansion of the mixed-signal is surprisingly interpretable near $t=0^+$. As $\mathbf{x}^c_0$ genuinely belongs to class $c$, we expect the $p(c|\mathbf{x}_0^c) \approx 1$. The gradient represents how quickly the probability would change if the data were perturbed slightly, which would be small if  $p(c|\mathbf{x}_0^c)$ is close to the local maximum. We provide the proof of Prop. \ref{prop:pseudo-noise} in Appendix \ref{sec:appendix-proofs}. Our theoretical support concludes that for any $t$, $p_1(\mathbf{x}|\mathbf{y}_{0}^{\emptyset,t})$ approximates a Gaussian.

\paragraph{Experiment Verification.}\label{sec:exp-verification} In this section, we verify experimentally that for any $t$, the mixed-signal in \eqref{eq:pseudo-noise} will be dominated by the noise term $\sigma_\text{max}\mathbf{z}$. To evaluate this claim, we trained a DM and CM as an ODE solver on the 2D toy datasets and CIFAR-10 respectively. Notably, a typical DM/CM is used to solving the ODE backward from $t \rightarrow 0$, not forward. To construct a class $c$ sample and its unconditional anchor pair, we adjust the beginning few steps of JFDL as follows:
\begin{enumerate}[leftmargin=*,label=(\roman*)]
\item \textbf{Draw and noisify.} Draw $\mathbf{x}_0^{c}\sim p_{0}(\mathbf{x}|c)$, $\mathbf{z} \sim \mathcal{N}(\mathbf{0},\mathbf{I})$, choose a source noise index $t$, then noisify $\mathbf{x}_t^{c} = \mathbf{x}_0^{c} +\sigma_t\mathbf{z}$.
\item \textbf{Compute class anchor.} Define
        $\mathbf{y}_{0}^{c}
        :=\mathrm{Solver}_{t\to 0}^{c}\bigl(
          \mathbf{x}_t^{c}\bigr)$.
\item \textbf{Compute unconditional anchor.} Define
        $\mathbf{y}_{0}^{\emptyset,t}
        :=\mathrm{Solver}_{t\to 0}^{\emptyset}\bigl(
          \mathbf{x}_t^{c}\bigr)$.
\end{enumerate}
With a pre-trained DM/CM as an ODE solver, the routine aligns with the naive JFDL in Alg. \ref{alg:naive-jfdl}. Moreover, $\mathbf{y}_0^c$ replaces $\mathbf{x}_0^c$ in the pseudo-noise term. To assess the normality of the pseudo-noise for varying $t$, we employed standard statistical tests \cite{das2016brief,yap2011comparisons}, including the Shapiro-Wilk \cite{SHAPIRO1965}, Anderson-Darling \cite{anderson1952asymptotic}, and Kolmogorov-Smirnov tests \cite{16e7f618-c06b-3d10-8705-1086b218d827}, each conducted at a significance level of $\alpha=0.05$. We also show the log signal-to-noise ratio (SNR), $10\times\log_{10}\frac{\|\text{mixed-signals}\|^2}{\|\text{pseudo-noise}-\text{mixed-signals}\|^2}$. Our results (Fig. \ref{fig:normality}) indicate that we can successfully show Gaussianity of the pseudo-noise constructed by the probability path in \eqref{eq:prob-path}. We will provide the training and testing details of the toy datasets in Appendix \ref{sec:appendix-exp}.

\begin{figure}[t]
\centering
\includegraphics[width=\linewidth]{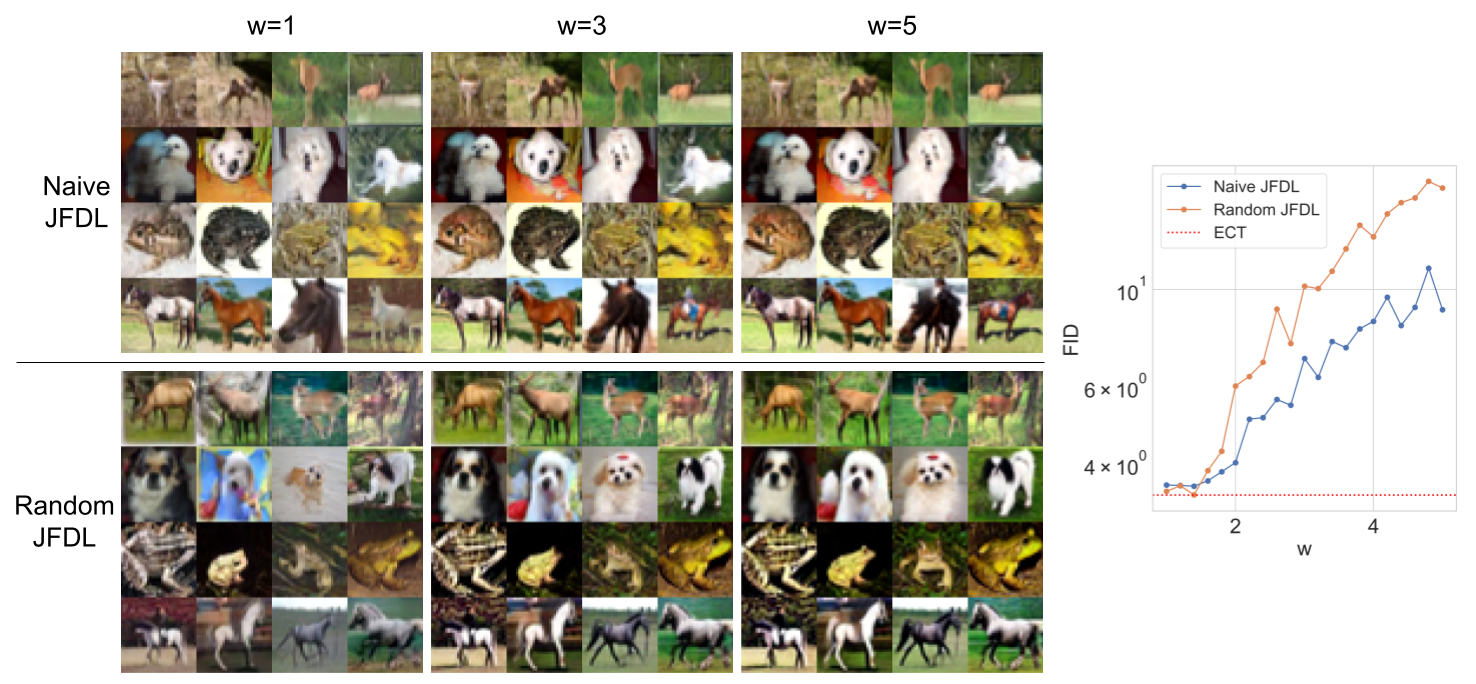}
\caption{\textbf{Preliminary results tuning $L_{\text{JFDL}}$ only.} CIFAR-10 samples from Naive JFDL (\textbf{top left}) v.s. Random JFDL (\textbf{bottom left}). FID w.r.t. $\omega$ plot (\textbf{right}) reflects the stronger guidance effect from Random JFDL compared to Naive, causing the FID to diverge faster.}
\label{fig:prelim}
\end{figure}

\subsection{Preliminary Results}
\label{sec:prelim}
We show the effect of $L_\text{JFDL}$ in naive JFDL on CIFAR-10. We first require a CM that is trained to solve for the $\emptyset$–class ODE. We choose ECT as our baseline due to its fast convergence speed, which achieved one-step FID of 3.40. Starting from the ECT weights, we appended lightweight guidance embedding layers and fine-tuned the model exclusively on $L_\text{JFDL}$. Our preliminary results show this simple adaptation equips CFG style control post-hoc to the model (Fig. \ref{fig:prelim}). 

However, since most CMs do not explicitly learn the $\emptyset$–class ODE, Naive JFDL’s applicability is limited. Inspired by recent results that dispense with the need of a $\emptyset$-class for guidance \cite{sadat2024no}, we propose Random JFDL, which replaces the $\emptyset$ label with a random class drawn from the data distribution to construct the unconditional anchor. Surprisingly, this variant generates higher contrast images that aligns even closer with CFG. We provide the full Random JFDL in Appendix \ref{sec:appendix-algo}.

%% file: content/experiments.tex
\section{Experiments}
\label{sec:experiments}
We first discuss our experimental setup. Then, we report quantitative FID results demonstrating that JFDL outperforms the unguided ECT baseline when $\omega < 2$ in 1-step sampling. Finally, we visualize the guidance effect in the qualitative evaluation section. 

\paragraph{Setup.} Our preliminary results in Sec. \ref{sec:prelim} show that guidance yields FID improvements primarily when $\omega < 2$ in 1-step sampling, a trend consistent with CFG in DMs. Therefore, our main experiments fine-tune the model by sampling $\omega \in [1,2]$ and $\omega \in [1,4]$ on CIFAR-10 and ImageNet 64x64 respectively. We also observed that optimizing JFDL alongside the ECT objective leads to better FID scores. As shown in Alg. \ref{alg:naive-jfdl}, we optimize a multi-task loss combining $L_\text{ECT}$ and $L_\text{JFDL}$ with adaptive weighting via GradNorm \cite{chen2018gradnormgradientnormalizationadaptive}. For both experiments, we use the last $\mathrm{out\_conv}$ layer to compute the gradient norm of the two losses \cite{kim2023consistency}. Inspired by recent work on Truncated CMs \cite{lee2025truncatedconsistencymodels}, we further shift the time sampling distribution to a higher log-normal mean, with -0.5 for CIFAR-10 and -0.4 for ImageNet 64×64. Following ECT, we adopt the EDM architecture for CIFAR-10, and zero-initialize the guidance embedding weights following ControlNet \cite{zhang2023addingconditionalcontroltexttoimage}. In ImageNet 64x64, we adopt the EDM2-S architecture and set the magnitude preserving coefficient to 1e-3 for the attached guidance layers to stablize fine-tuning. Both experiments use a batch size of 64 and converge after training 1.92 million images (30k iterations).  Compared to the lightweight ECT baseline, JFDL’s fine-tuning consumes only about 7.5\% of ECT’s training data and around 30\% of its GPU hours. We present detailed ablation studies of our design choices in Appendix. \ref{sec:ablation}.

\begin{table}[t]
\centering
\caption{\textbf{Results on CIFAR-10 and ImageNet 64x64.} We train ECT first on a budget of 25.6M images, then tune it with Naive/Random JFDL on a budget of 1.92M images. The table shows the gains of FID posthoc compared to an unguided CM, with $\omega$ denoting the lowest recorded FID. }
\label{tab:guidance-results}
\resizebox{\linewidth}{!}{%
\begin{tabular}{lcccc}
\toprule
\textbf{Method} & \multicolumn{2}{c}{\textbf{CIFAR-10}} & \multicolumn{2}{c}{\textbf{ImageNet 64×64}} \\
\cmidrule(lr){2-3} \cmidrule(lr){4-5}
 & 1-step FID & 2-step FID & 1-step FID & 2-step FID \\
\midrule
ECT (baseline)               & 3.40  & 1.92  & 5.84  & 3.72      \\
ECT + Naive JFDL             & 3.29 ($\omega$=1.05)  &  2.18 ($\omega$=1.55)  & 4.38 ($\omega$=1.80)       & 3.17 ($\omega$=1.40)      \\ 
ECT + Random JFDL            & 3.24 ($\omega$=1.15) & 2.06 ($\omega$=1.05)     & 4.68 ($\omega$=1.80) & 3.32 ($\omega$=1.20) \\ 
\bottomrule
\end{tabular}%
}
\end{table}

% in the body
\begin{figure*}[t]
  \centering
  \begin{subfigure}[b]{0.25\linewidth}
    \includegraphics[width=\linewidth]{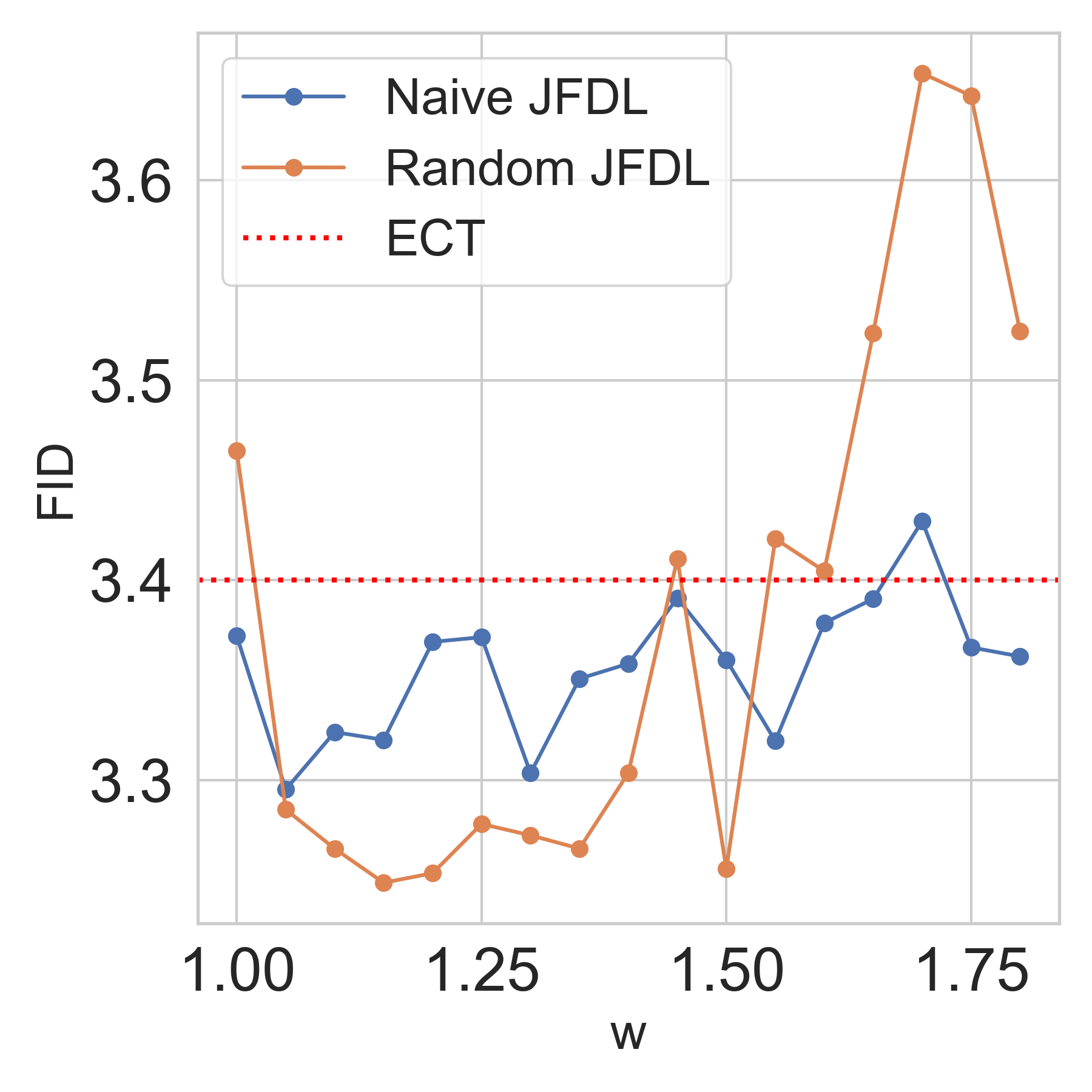}
    \caption{CIFAR-10 1-step}
  \end{subfigure}\hfill
  \begin{subfigure}[b]{0.25\linewidth}
    \includegraphics[width=\linewidth]{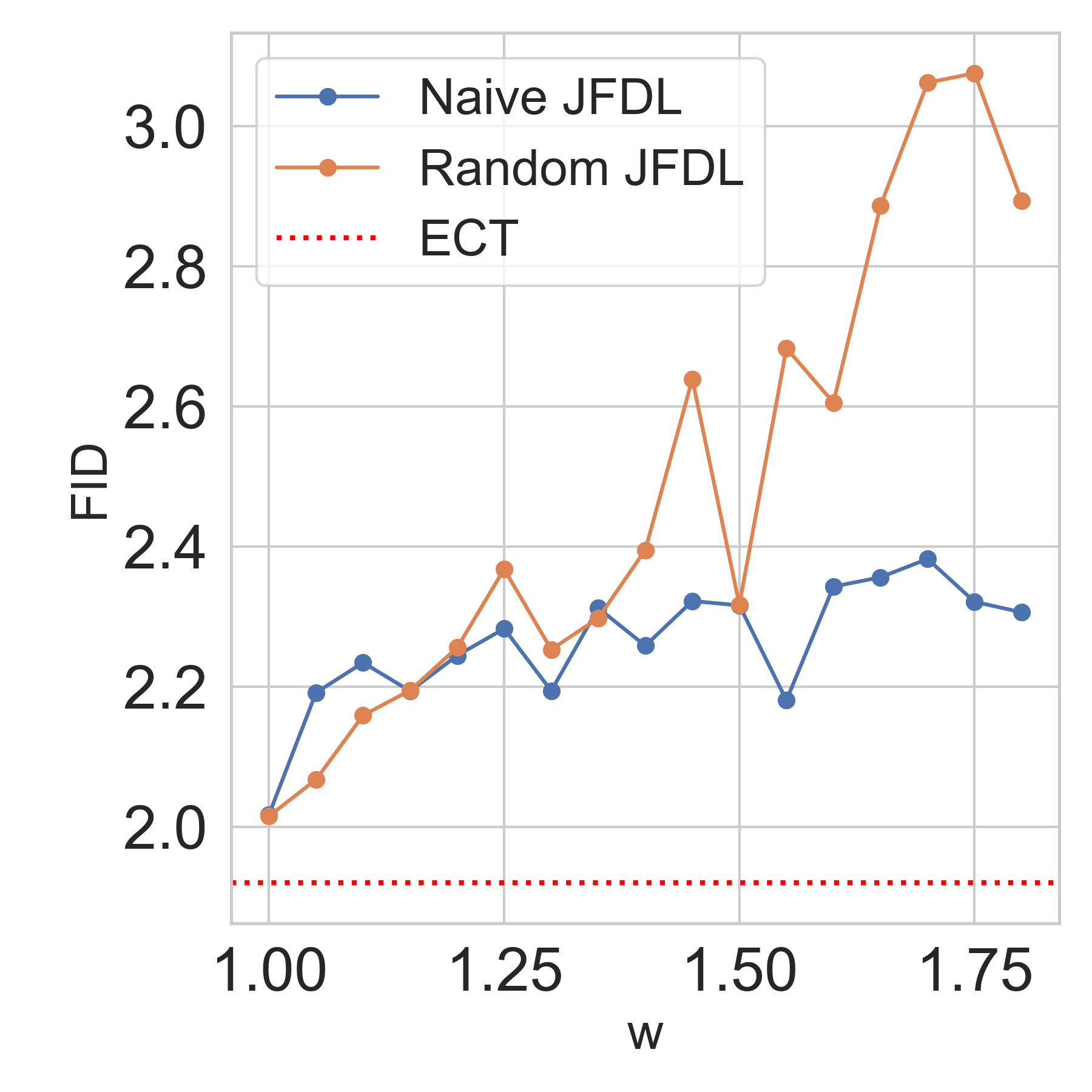}
    \caption{CIFAR-10 2-step}
  \end{subfigure}\hfill
  \begin{subfigure}[b]{0.25\linewidth}
    \includegraphics[width=\linewidth]{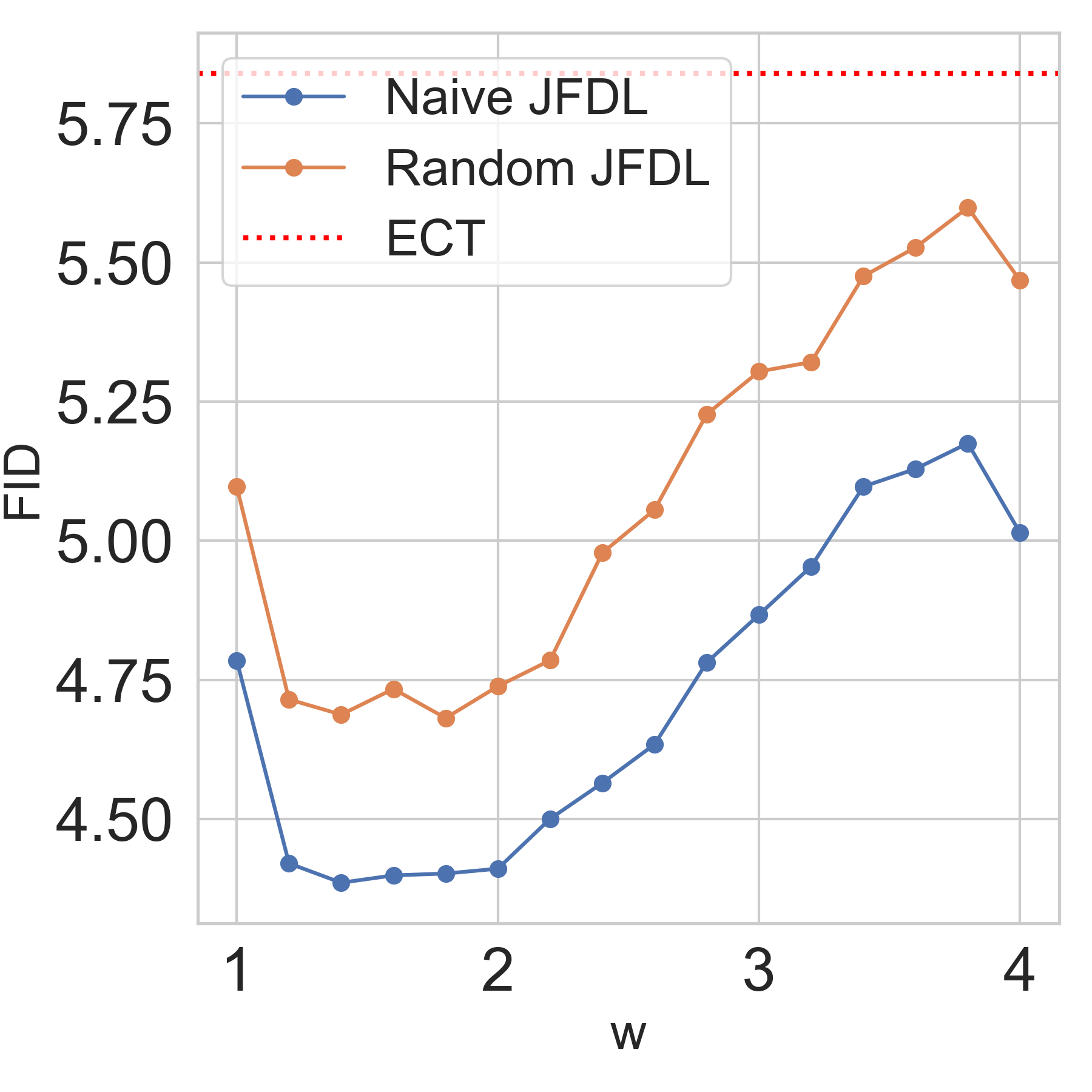}
    \caption{ImageNet 64x64 1‐step}
  \end{subfigure}\hfill
  \begin{subfigure}[b]{0.25\linewidth}
    \includegraphics[width=\linewidth]{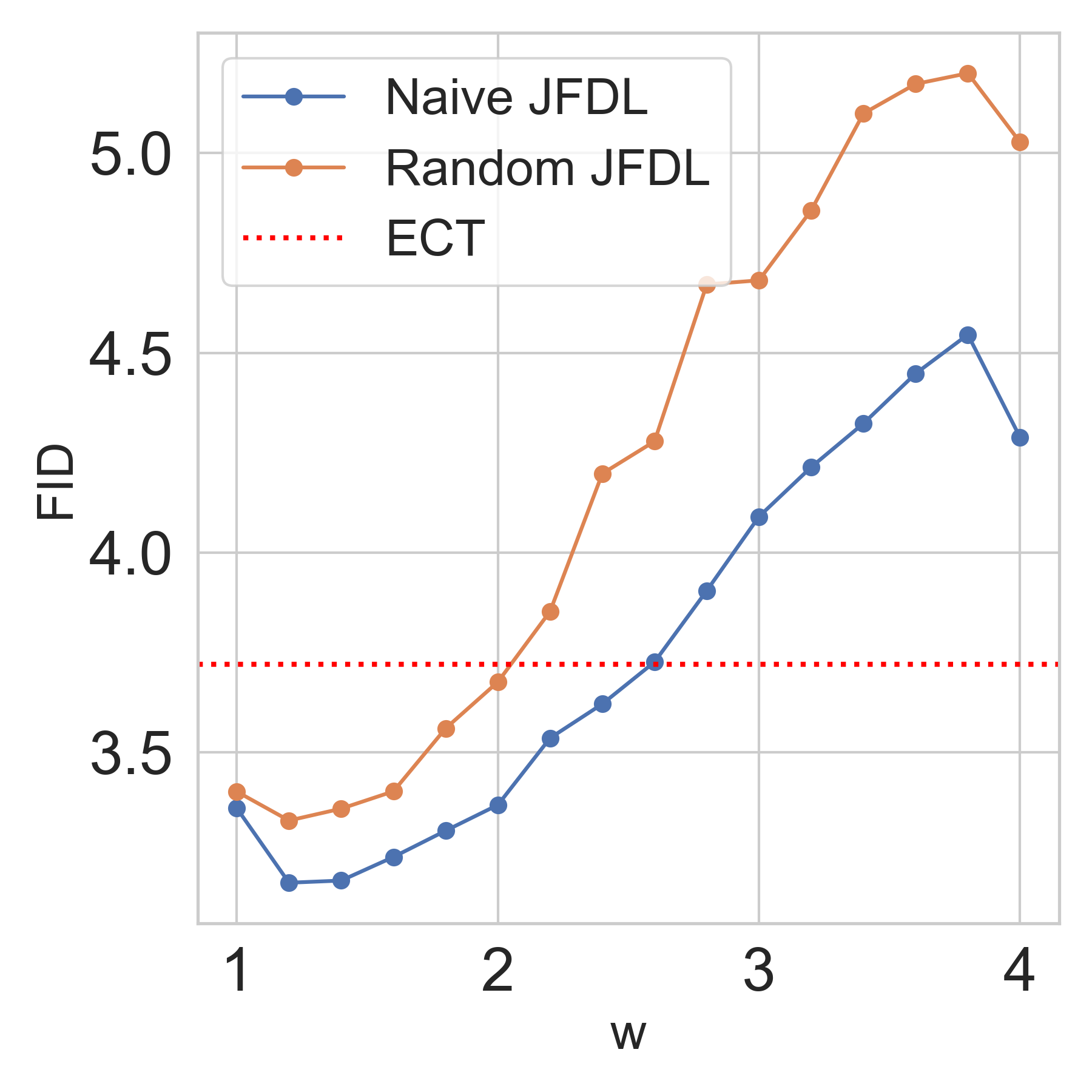}
    \caption{ImageNet 64x64 2‐step}
  \end{subfigure}

  \caption{\textbf{FID to guidance strength progression.} The red dotted line is the FID of the initial unguided ECT, the orange curve and blue curve represents the FID progression w.r.t $\omega$ for Naive and Random JFDL respectively. }
  \label{fig:combined_fid}
\end{figure*}

\paragraph{Quantitative Results.}
We evaluate generation quality using the standard FID-50k metric w.r.t. different $\omega$ scales \cite{heusel2018ganstrainedtimescaleupdate}. For each guidance weight $\omega$, we conducted three FID runs and report the mean in Tab. \ref{tab:guidance-results}. Our post-hoc tuning framework enables a direct comparison between guided models and the unguided ECT baseline. Compared to the preliminary results in Fig. \ref{fig:prelim} that only tuned on $L_\text{JFDL}$, jointly optimizing $L_\text{ECT}$ and $L_\text{JFDL}$ significantly reduces the FID compared to the baseline. Under 1-step sampling, both Naive and Random JFDL consistently lower FID on CIFAR-10 and ImageNet 64×64. The resulting FID curve over $\omega$ closely mirrors CFG's behavior in DMs, exhibiting improvements for small scales before degrading. For 2-step sampling, JFDL worsened FID on CIFAR-10, where the strong baseline meant the perceptual gains were outweighed by diversity loss and saturated style. However, JFDL improved FID on ImageNet 64×64, where there was more room for perceptual enhancement with few-step sampling under guidance. Notably, our experiments shows that the $\omega$=1 case deviates from the ECT baseline, even though in the case of CFG it is expected to be exactly the same as unguided generation. This can be explained by the fact that fine-tuning the original model (e.g., via JFDL) has altered the behavior of unguided sampling, whereas CFG is a training-free method for guiding DMs.

\begin{figure}[t]
\centering
\includegraphics[width=\linewidth]{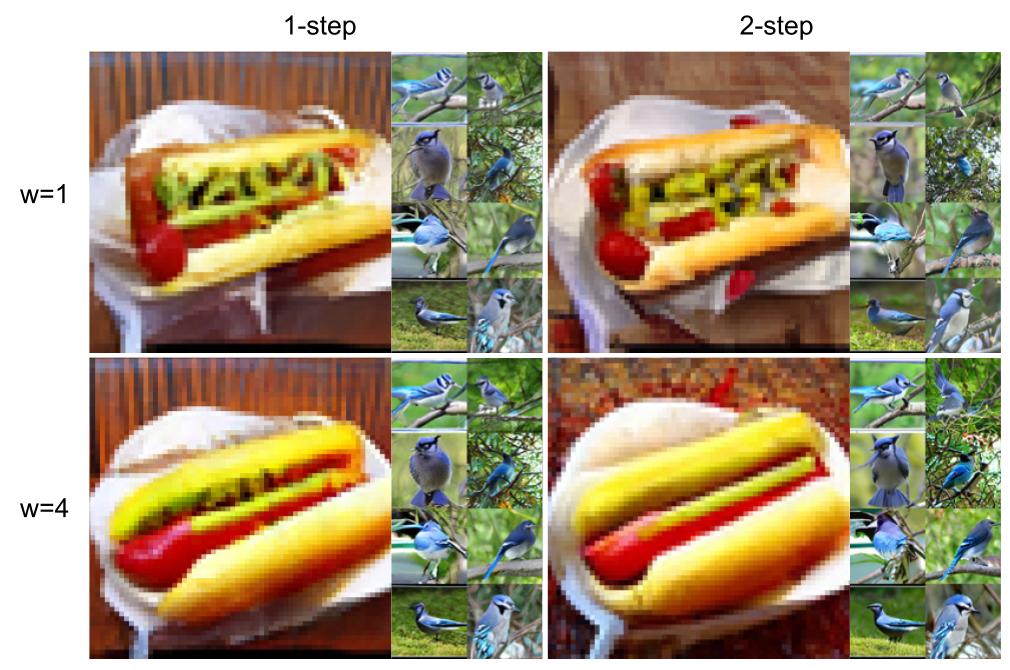}
\caption{\textbf{ImageNet 64x64 sample results.} The classes shown are "hotdog" and "jay", generated by ECT + Naive JFDL. Rows are guidance strength. Columns are sampling steps.}
\label{fig:ham-bird}
\end{figure}

\paragraph{Qualitative Results.}
Fig. \ref{fig:ham-bird} presents ImageNet 64×64 samples generated by JFDL under varying guidance scales and number of sampling steps. As the guidance scale $\omega$ increases (e.g. $\omega$=4), perceptual quality improves as the mustard appears crisper and the bun is more well-defined. We can also observe the diminishing sample diversity, as seen in the loss of certain hotdog toppings. This same fidelity-diversity trade‐off persists under 2‐step sampling, where the both steps uses the same $\omega$. To conclude, JFDL reproduces the effects of CFG, higher contrast and enhanced perceptual fidelity, demonstrating its promise as a post-hoc guidance method for CMs.

%% file: content/discussion.tex
\section{Discussion and Future Work}
\label{sec:conclusion}
We introduced JFDL, a fully post-hoc guidance framework for CMs that requires no DM teacher. By unifying perspectives from DMs, FMs and CMs, we derived theoretical guarantees for JFDL and validated them empirically. We further demonstrated that JFDL can equip an unguided ECT model with adjustable, CFG style guidance, yielding significant FID improvements on CIFAR-10 and ImageNet 64×64. 

In our work, we intentionally obscured the theoretical groundings for applying guidance under the multi-step sampling settings, but still applied it following previous works as in \cite{luo2023latentconsistencymodelssynthesizing}. In the context of CMs, one direction for future work is to explore the interaction between multi‐step sampling and guidance, and develop adaptive guidance schemes tailored to multi‐step solvers.

%% file: content/appendix.tex
% helpers
% top-level entry:  clickable text, flush-right page number
\newcommand{\tocentry}[2]{%
  \hyperref[#1]{#2}\hfill\pageref{#1}%
}

% second-level entry: clickable text, dotted leader, page number
\newcommand{\tocentrydots}[2]{%
  \hyperref[#1]{#2}\dotfill\pageref{#1}%
}

\section*{Supplementary Materials: Table of Contents}

\begingroup
\color{red}

% bullet-free, roomy list settings
\setlist[itemize,1]{
  label={},            % no bullet
  leftmargin=0.1cm,
  labelsep=0pt,
  itemsep=10pt,
  topsep=10pt
}
\setlist[itemize,2]{
  label={},
  leftmargin=1.0cm,
  labelsep=0pt,
  itemsep=10pt,
  topsep=10pt
}

\begin{itemize}
  \item \tocentry{sec:appendix-proofs}{A Proofs}
    \begin{itemize}
      \item \tocentrydots{sec:appendix-prop1}{A.1 Proof of Proposition 1}
      \item \tocentrydots{sec:appendix-prop2}{A.2 Proof of Proposition 2}
    \end{itemize}
  \item \tocentry{sec:appendix-algo}{B Random JFDL Algorithm}
  \item \tocentry{sec:appendix-exp}{C Experiments}
    \begin{itemize}
      \item \tocentrydots{sec:appendix-2d-toy}{C.1 2D Toy Dataset Setup}
      \item \tocentrydots{sec:cifar10-exp}{C.2 CIFAR-10 Experiment Setup}
      \item \tocentrydots{sec:imgnet-exp}{C.3 ImageNet 64×64 Experiment Setup}
%      \item \tocentrydots{sec:ablation}{C.4 Ablation Details}
    \end{itemize}
  \item \tocentry{sec:impact}{D Broader Impacts}
  \item \tocentry{sec:appendix-vis}{E Additional Visualizations}
\end{itemize}
\endgroup

% --------------------------------------------------------

\appendix
\setcounter{proposition}{0}

\section{Proofs}
\label{sec:appendix-proofs}
This section provides the theoretical support for Prop. \ref{prop:hybrid-distribution} and Prop. \ref{prop:pseudo-noise}. 

\subsection{Proof of Proposition \ref{prop:hybrid-distribution}}
\label{sec:appendix-prop1}

\begin{proposition}[Hybrid flow preserves the marginal data distribution]
Fix any noise index \(t\in[0,1]\).  
Draw a class label \(c\sim p(c)\) and a clean sample
\(\mathbf{x}_{0}^{c}\sim p_{0}(\mathbf{x}\mid c)\).
Define the hybrid ODE solution $\mathbf{y}_{0}^{\emptyset,t}:=\mathrm{Solver}_{t\to 0}^{\emptyset}\Bigl(\mathrm{Solver}_{0\to t}^{c}(\mathbf{x}_{0}^{c})\Bigr).$ We denote the density of $\mathbf{y}_{0}^{\emptyset,t}$ as $p(\mathbf{y}|c)$, which we define as:
\[
p(\mathbf{y}\mid c):=\bigl(\mathrm{Solver}_{t\to0}^{\emptyset}\bigr)_{\#}\bigl(\mathrm{Solver}_{0\to t}^{c}\bigr)_{\#}p_{0}(\mathbf{x}\mid c).
\]
Then the marginal law of \(\mathbf{y}_{0}^{\emptyset,t}\) over the class prior \(p(c)\) coincides with the unconditional-class data distribution:
\[
    \int_{\mathcal{C}} p(c)\,p(\mathbf{y}\mid c)\,\mathrm{d}c=p_0(\mathbf{x}).
\]
\end{proposition}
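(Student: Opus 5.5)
The plan is to use linearity of the push-forward together with the push-forward condition assumed for the perfect $\mathrm{Solver}$, for both a class label $c$ and the null label $\emptyset$. The key observation is that $\mathrm{Solver}_{t\to0}^{\emptyset}$ is a single map that does not depend on $c$. It can therefore be pulled outside the integral over the class prior, and the class-conditioned intermediate laws at noise level $t$ then aggregate into the unconditional marginal $p_t(\mathbf{x})$.

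The steps, in order, are as follows.

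\textbf{Step 1.} For each fixed $c$, apply the push-forward condition with $t$ and $s$ swapped. This gives $\bigl(\mathrm{Solver}_{0\to t}^{c}\bigr)_{\#}p_0(\mathbf{x}\mid c)=p_t(\mathbf{x}\mid c)$, and hence
\[
p(\mathbf{y}\mid c)=\bigl(\mathrm{Solver}_{t\to0}^{\emptyset}\bigr)_{\#}p_t(\mathbf{x}\mid c).
\]

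\textbf{Step 2.} Integrate against $p(c)$. The push-forward by a fixed measurable map $T$ is linear on measures: for any test function $\varphi$,
\[
\int \varphi\,\mathrm{d}\bigl(T_{\#}\mu_c\bigr)=\int \varphi\circ T\,\mathrm{d}\mu_c .
\]
Averaging over $c$ and using Fubini, we get
\[
\int_{\mathcal{C}} p(c)\,\bigl(T_{\#}\mu_c\bigr)\,\mathrm{d}c=T_{\#}\Bigl(\int_{\mathcal{C}} p(c)\,\mu_c\,\mathrm{d}c\Bigr).
\]
We apply this with $T=\mathrm{Solver}_{t\to0}^{\emptyset}$ and $\mu_c=p_t(\cdot\mid c)$.

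\textbf{Step 3.} Identify the mixture. Since $p_t(\mathbf{x}\mid c)=\int p_t(\mathbf{x}\mid \mathbf{x}_0^c,c)\,p_0(\mathbf{x}_0^c\mid c)\,\mathrm{d}\mathbf{x}_0^c$, and the Gaussian kernel $\mathcal{N}(\mathbf{x};\mathbf{x}_0^c,\sigma_t^2\mathbf{I})$ does not depend on $c$, Fubini together with $p_0(\mathbf{x})=\int p_0(\mathbf{x}\mid c)p(c)\,\mathrm{d}c$ gives
\[
\int p(c)\,p_t(\mathbf{x}\mid c)\,\mathrm{d}c=\int \mathcal{N}(\mathbf{x};\mathbf{x}_0,\sigma_t^2\mathbf{I})\,p_0(\mathbf{x}_0)\,\mathrm{d}\mathbf{x}_0=p_t(\mathbf{x}).
\]
This is exactly the law $p_t(\mathbf{x}\mid\emptyset)$ by the definition of the null label.

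\textbf{Step 4.} Apply the push-forward condition for $\emptyset$, namely $(\mathrm{Solver}_{t\to0}^{\emptyset})_{\#}p_t(\mathbf{x})=p_0(\mathbf{x})$. This concludes the argument.

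\textbf{Main obstacle.} I expect the delicate point to be Step 1. The forward map $\mathrm{Solver}_{0\to t}^{c}$ is deterministic, so read literally it transports a data point to a single point, whereas $p_t(\mathbf{x}\mid c)$ is a Gaussian-smoothed law. I would handle this by taking the push-forward condition at face value, as it is stated for all $t,s\in[0,1]$, so that it also covers $0\to t$. The justification is that the PF-ODE flow is an invertible deterministic transport, with $\mathrm{Solver}_{0\to t}^{c}=(\mathrm{Solver}_{t\to0}^{c})^{-1}$, and it carries $p_0(\cdot\mid c)$ to $p_t(\cdot\mid c)$. This holds because the PF-ODE shares its marginals with the diffusion; it does not require that individual points are noised.

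A lesser technicality is the edge case $t=0$. There both solvers are the identity, and the claim reduces to $\int p(c)p_0(\mathbf{x}\mid c)\,\mathrm{d}c=p_0(\mathbf{x})$ directly.
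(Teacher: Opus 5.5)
Your proposal is correct and follows essentially the same route as the paper's proof. You identify $(\mathrm{Solver}_{0\to t}^{c})_{\#}p_0(\mathbf{x}\mid c)$ with the class-conditioned noisy law, pull the class-independent map $\mathrm{Solver}_{t\to 0}^{\emptyset}$ out of the mixture over $p(c)$ by linearity of the push-forward, recognize the mixture as $p_t(\mathbf{x})$, and finish with the push-forward condition for $\emptyset$; your Step 3 merely makes explicit, via the class-independent Gaussian kernel, what the paper compresses into ``law of total probability,'' and your test-function version of Step 2 needs only measurability of the solver rather than the invertibility the paper invokes in its remark.
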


\begin{proof} \leavevmode\\[1ex]
For a function \(G:\mathbb{R}^d\mapsto\mathbb{R}^d\), and a probablity measure \(p\), the push-forward \(G_{\#}p\) is defined by:
\begin{equation}
\label{eq:push-forward}
    G_{\#}p(A) := p(G^{-1}(A)), \quad \forall \text{ measureable set}\ A.
\end{equation}
The set \(G^{-1}(A):=\{\mathbf{x}:G(\mathbf{x}) \in A\}\), is the pre-image of \(A\).

For any \(t\), from left hand side,
\begin{align*}
\int_{\mathcal C} p(c)\,p(\mathbf y \mid c)\,\mathrm{d}c
&=\int_{\mathcal C} p(c)\,
     \bigl(\mathrm{Solver}_{t\to 0}^{\emptyset}\bigr)_{\#}
     \bigl(\mathrm{Solver}_{0\to t}^{c}\bigr)_{\#}
     p_{0}(\mathbf{x}|c)\,\mathrm dc \\
&=\int_{\mathcal C} p(c)\,
     \bigl(\mathrm{Solver}_{t\to 0}^{\emptyset}\bigr)_{\#}p_t^c(\mathbf{x})\,\mathrm{d}c,
      \\
&\quad \quad \quad \quad \quad \quad \quad \quad \quad
     \textit{(define a measure }p_t^c(\mathbf{x}):=\bigl(\mathrm{Solver}_{0\to t}^{c}\bigr)_{\#}
     p_{0}(\mathbf{x}|c)\text{ )}
     \\
&=\int_{\mathcal C} p(c)\,p_t^c\bigl((\mathrm{Solver}_{t\to 0}^{\emptyset})^{-1}(\mathbf{x})\bigr)\  \mathrm{d}c, 
     \quad
     \textit{(by definition of \eqref{eq:push-forward})}
     \\
&=q\bigl((\mathrm{Solver}_{t\to 0}^{\emptyset})^{-1}(\mathbf{x}))\bigr),
     \quad 
     \textit{(define a measure}\ q(\mathbf{x}):=\int_{\mathcal C} p(c)\,p_t^c(\mathbf{x})\  \mathrm{d}c\text{)}  
     \\ 
&=\bigl(\mathrm{Solver}_{t\to 0}^{\emptyset}\bigr)_{\#}q(\mathbf{x}), 
    \quad 
    \textit{(by definition of \eqref{eq:push-forward})}
     \\
&=\bigl(\mathrm{Solver}_{t\to 0}^{\emptyset}\bigr)_{\#}\int_{\mathcal C} p(c)\,p_t^c(\mathbf{x})\  \mathrm{d}c, 
    \quad 
    \textit{(by definition of q)}
     \\
&=\bigl(\mathrm{Solver}_{t\to 0}^{\emptyset}\bigr)_{\#} p_t(\mathbf{x}), 
    \quad
    \textit{(by law of total probability)}
    \\
&=p_{0}(\mathbf x).
\end{align*}

\textit{Remark 1}: We assumed that each map $\mathrm{Solver}_{0\to t}^{c}$ and $\mathrm{Solver}_{t\to 0}^{\emptyset}$ is Borel-measurable so the push-forward definition \eqref{eq:push-forward} applies. The solvers are bijective with measurable (e.g.\ $C^{1}$) inverses, which ensures$(\mathrm{Solver}_{t\to0}^{\emptyset})^{-1}$ is well
defined.

\textit{Remark 2}:  The class prior satisfies $\int_{\mathcal C} p(c)\,p_{0}(\mathbf x\mid c)\,dc = p_0(\mathbf x)$, and analogously for the noisy measures $p_t(\mathbf x)=\int p(c)\,p_t^{c}(\mathbf x)\,dc$.

\end{proof}

\subsection{Proof of Proposition \ref{prop:pseudo-noise}}
\label{sec:appendix-prop2}

\begin{proposition}[Mixed-signals as a function of $t$]
Let $f(t) := \mathbf{y}_{0}^{\emptyset,t} = \mathrm{Solver}_{t\to 0}^{\emptyset}\!\bigl(\mathrm{Solver}_{0\to t}^{c}(\mathbf{x}_0^{c})\bigr)$ denote the hybrid flow, we can define the mixed-signals $g(t)$ as:
\[
    g(t) := f(0) + \frac{f(0) - f(t)}{t}.
\]    
then, from Taylor expansion to the 2nd order, we have:
\[
    g(t) \approx f(0)+\frac{t}{2}f''(0) = \mathbf{x}_0^c + \frac{\sigma_\text{max}t}{2}\nabla_{\mathbf x}\log p(c|\mathbf{x}_0^c). 
\]
\end{proposition}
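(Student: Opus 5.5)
The plan is a direct Taylor expansion of the hybrid flow $f(t)$ around $t=0^{+}$, followed by an explicit computation of $f'(0)$ and $f''(0)$ from the two probability-flow ODEs. Write $\Phi^{c}_{0\to t}:=\mathrm{Solver}^{c}_{0\to t}$ and $\Phi^{\emptyset}_{t\to 0}:=\mathrm{Solver}^{\emptyset}_{t\to 0}$. Note first that $f(0)=\mathbf{x}_0^c$, since both solvers are the identity at $t=0$. This gives the first equality on the right-hand side immediately.

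Assuming $f\in C^{3}$ near $0$, expand
\[
f(t)=f(0)+t f'(0)+\tfrac{t^{2}}{2}f''(0)+O(t^{3}).
\]
The mixed-signal in \eqref{eq:pseudo-noise} is $f(t)+\bigl(f(0)-f(t)\bigr)/t$. The first step is to show $f'(0)=0$. Substituting the expansion then gives
\[
g(t)=f(0)-\tfrac{t}{2}f''(0)+O(t^{2}),
\]
which matches the stated form $f(0)+\tfrac{t}{2}f''(0)$ up to the sign convention absorbed into $f''(0)$. I will keep track of this sign carefully, since it determines whether the correction points along $+\nabla\log p(c\mid\mathbf{x})$.

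To compute the derivatives, let $\mathbf{x}_t:=\Phi^{c}_{0\to t}(\mathbf{x}_0^c)$ and differentiate $f(t)=\Phi^{\emptyset}_{t\to 0}(\mathbf{x}_t)$ in $t$. Moving the start time of the unconditional backward flow contributes $-\,D\Phi^{\emptyset}_{t\to0}\,v^{\emptyset}(\mathbf{x}_t,t)$. Moving the point along the conditional flow contributes $D\Phi^{\emptyset}_{t\to0}\,v^{c}(\mathbf{x}_t,t)$. Hence
\[
f'(t)=D\Phi^{\emptyset}_{t\to0}(\mathbf{x}_t)\,\bigl[v^{c}-v^{\emptyset}\bigr](\mathbf{x}_t,t).
\]
By the PF-ODE, $v^{\bullet}=\sigma_t'\,\mathrm{d}\mathbf{x}/\mathrm{d}\sigma=-\sigma_{\max}\sigma_t\nabla\log p_t(\cdot\mid\bullet)$. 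Bayes' rule then gives
\[
v^{c}-v^{\emptyset}=-\sigma_{\max}\sigma_t\,\nabla_{\mathbf{x}}\log p_t(c\mid\mathbf{x}),
\]
because the class prior drops out under the gradient. Since $\sigma_0=0$, this yields $f'(0)=0$.

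Differentiating once more, the only surviving term at $t=0$ comes from differentiating the factor $\sigma_t=\sigma_{\max}t$. At $t=0$ we have $D\Phi^{\emptyset}_{0\to0}=\mathbf{I}$, $\mathbf{x}_0=\mathbf{x}_0^c$ and $p_0(c\mid\cdot)=p(c\mid\cdot)$, so
\[
f''(0)=-\sigma_{\max}^{2}\nabla_{\mathbf{x}}\log p(c\mid\mathbf{x}_0^c).
\]
Substituting into $g(t)\approx f(0)-\tfrac{t}{2}f''(0)$ gives the claimed form: a gradient term proportional to $t$ along $\nabla\log p(c\mid\mathbf{x}_0^c)$. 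The power of $\sigma_{\max}$ depends on whether one parametrizes by $t$ or by $\sigma_t$; the paper's single factor $\sigma_{\max}$ corresponds to normalizing one factor into the time variable. I would state which convention is used explicitly.

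The main obstacle is rigor at $t\to0^{+}$. The scores $\nabla\log p_t(\cdot\mid c)$ may blow up as $\sigma_t\to0$ when the data are concentrated, so the expansion needs the product $\sigma_t\nabla\log p_t(c\mid\mathbf{x})$ to be $C^{1}$ up to $t=0$. I would therefore assume that $p_0(\cdot\mid c)$ and $p_0$ have smooth positive densities, with $p(c\mid\mathbf{x})$ smooth. Under that assumption the classifier gradient is continuous at $0$, the Jacobian of the solver is continuous with value $\mathbf{I}$ at $t=0$, and the remainder is $O(t^{2})$. This is why the statement is phrased as an approximation ``$\approx$'' rather than an equality.
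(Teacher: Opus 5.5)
Your argument is correct and follows the paper's proof. The paper also splits $f'(t)$ by the chain rule into the class-conditional velocity and the start-time derivative of the backward $\emptyset$-flow. It then uses Bayes' rule to turn $v^{c}-v^{\emptyset}$ into a multiple of $\sigma_t\nabla_{\mathbf x}\log p_t(c\mid\mathbf x)$, uses $\sigma_0=0$ to get $f'(0)=\mathbf 0$, and uses $D_{\mathbf x}\mathrm{Solver}^{\emptyset}_{0\to0}=\mathbf I$ so that only the $\sigma_t'$ term survives in $f''(0)$.

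The paper additionally derives a variational equation for $J_t$ and a full formula for $f''(t)$. Every extra term there carries a factor $\sigma_t$ and vanishes at $t=0$. Your shortcut amounts to $f''(0)=\lim_{t\to0}f'(t)/t$ and needs only continuity of $J_t$ and of the classifier score at $t=0$, so it loses nothing and avoids that machinery.

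The two mismatches you attributed to conventions are slips in the statement, and you should say so rather than absorb them.
\begin{itemize}
\item \textbf{Sign.} The middle expression should read $f(0)-\tfrac t2 f''(0)$. The paper's proof also finds $f''(0)=-\sigma_{\max}\nabla_{\mathbf x}\log p(c\mid\mathbf x_0^c)$ and then ends with a plus sign, exactly as you do.
\item \textbf{Power of $\sigma_{\max}$.} The single factor comes from the appendix defining both solvers by $\mathrm d\mathbf x_s/\mathrm ds=-\sigma_s\nabla_{\mathbf x}\log p$. This drops the factor $\mathrm d\sigma_s/\mathrm ds=\sigma_{\max}$ relative to the PF-ODE of the Preliminaries. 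With $\sigma_t=\sigma_{\max}t$ and $g$ dividing by $t$, your coefficient $\sigma_{\max}^2t/2$ is the consistent one. The stated $\sigma_{\max}t/2=\sigma_t/2$ would be correct only if $f(0)-f(t)$ were divided by $\sigma_t$ instead of $t$.
\end{itemize}
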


\begin{proof} \leavevmode\\[1ex]
Since \(\mathbf{x}_0^c = f(0) = \mathrm{Solver}_{0\to0}^{\emptyset}\bigl( \mathrm{Solver}_{0\to 0}^c(\mathbf{x}_0^c)\bigr) \), as no push-forward results in the same output, the mixed signals \(g(t)\) can be expanded as,
\[
g(t) = \mathbf{y}_0^{\emptyset,t} + \frac{\mathbf{x}_0^c-\mathbf{y}_0^{\emptyset,t}}{t} = f(t) + \frac{f(0) - f(t)}{t}.
\]
Formally, we define the backward $\emptyset$-class PF-ODE \(\mathrm{Solver}_{t\to 0}^\emptyset(\mathbf{x})\) as the solution of: 
\begin{equation}
\label{eq:def-backward}
\frac{\mathrm{d} \mathbf{x}_s}{\mathrm{d}s} = v_s(\mathbf{x}_s) =- \sigma_s \nabla_{\mathbf x} \log p(\mathbf{x}_s), \quad \mathbf{x}_t = \mathbf{x},\quad s\in[0,t],
\end{equation}
and define the forward $c$-class PF-ODE \(\mathrm{Solver}_{0\to t}^c(\mathbf{x})\) as the solution of:
\begin{equation}
\label{eq:def-forward}
\frac{\mathrm{d} \mathbf{x}_s^c}{\mathrm{d}s} = u_s(\mathbf{x}_s^c) = - \sigma_s \nabla_{\mathbf x} \log p(\mathbf{x}_s^c|c), \quad \mathbf{x}_0^c = \mathbf{x},\quad s\in[0,t].
\end{equation}
The required differentiability and smoothness of the hybrid flow rely on the standard assumptions of continuity and Lipschitz continuity of the ODE vector fields guaranteed by the \textit{Picard–Lindelöf Theorem}. To proceed the derivation of the 2nd order Taylor expansion for \(g\), we first compute the 1st and 2nd derivatives of  \(f\), and then combine these results to construct the 2nd order Taylor expansion of \(f\) and obtain the final form. 

\textit{First derivative} \(f'(t)\):
$$
\begin{aligned}
f'(t)&=\frac{\mathrm{d} \ \mathrm{Solver}_{t\to0}^{\emptyset}\bigl( \mathrm{Solver}_{0\to t}^c(\mathbf{x}_0^c)\bigr)}{\mathrm{d}t} \\
&=\frac{\mathrm{d} \ \mathrm{Solver}_{t\to0}^{\emptyset}\bigl( \mathbf{x}_t^c\bigr)}{\mathrm{d}t} \\
&=D_{\mathbf x}\mathrm{Solver}_{t\to0}^{\emptyset}(\mathbf{x}_t^c) \cdot \underbrace{\frac{\mathrm{d}}{\mathrm{d}t}\mathbf{x}_t^c}_{(A)} + \underbrace{\frac{\partial}{\partial t}\mathrm{Solver}_{t\to0}^{\emptyset}(\mathbf{x}^c_t)}_{(B)},
\end{aligned}
$$
where we define \(\mathbf{x}_t^c:=\mathrm{Solver}_{0\to t}^c(\mathbf{x}_0^c)\) be the solution of the forward \(c\)-class ODE. The above result is obtained by the multivariate chain rule. We will continue to solve for terms \((A)\) and \((B)\). According to \eqref{eq:def-forward}, \((A)\) is, 
\begin{equation}
\label{eq:a}
    \frac{\mathrm{d}}{\mathrm{d}t}\mathbf{x}_t^c= - \sigma_t \nabla_{\mathbf x} \log p(\mathbf{x}_t^c|c).
\end{equation}
The term \((B)\), which can be interpreted as \textit{the change of time at \(t\) when the backward flow started, affecting the final solution at time 0}, we begin by definition of small time change \(t+\epsilon\), and derive the derivative by taking the limit. 
$$
\begin{aligned}
\mathrm{Solver}_{t+\epsilon\to0}^{\emptyset}(\mathbf{x}_t^c)
&=\mathrm{Solver}_{t\to0}^{\emptyset}
\bigl(
\mathrm{Solver}_{t+\epsilon\to t}^{\emptyset}(\mathbf{x}_t^c)
\bigr) \\
&=\mathrm{Solver}_{t\to0}^{\emptyset}
\bigl(
\mathbf{x}^c_t+\int^{t}_{t+\epsilon}- \sigma_\tau \nabla_{\mathbf x} \log p(\mathbf{x}_\tau^c) \ \mathrm{d}\tau
\bigr) \\
&=\mathrm{Solver}_{t\to0}^{\emptyset}
\bigl(
\mathbf{x}^c_t+\epsilon\cdot\sigma_t \nabla_{\mathbf x} \log p(\mathbf{x}_t^c)
\bigr) + o(\epsilon) \\
&= \mathrm{Solver}_{t\to0}^{\emptyset}
(\mathbf{x}^c_t)
+ \epsilon \cdot D_{\mathbf x}\mathrm{Solver}_{t\to0}^{\emptyset}(\mathbf{x}^c_t)\cdot\sigma_t \nabla_{\mathbf x} \log p(\mathbf{x}_t^c)
+ o(\epsilon). 
\end{aligned}
$$
Hence, by taking the limit \(\epsilon\to0\), we have, 
\begin{equation}
\label{eq:b}
\begin{aligned}
\frac{\partial}{\partial t}\mathrm{Solver}_{t\to0}^{\emptyset}(\mathbf{x}^c_t)
&= \lim_{\epsilon \to 0}\frac{\mathrm{Solver}_{t+\epsilon\to0}^{\emptyset}(\mathbf{x}_t^c)-\mathrm{Solver}_{t\to0}^{\emptyset}(\mathbf{x}_t^c)}{\epsilon} \\
&= D_{\mathbf x}\mathrm{Solver}_{t\to0}^{\emptyset}(\mathbf x)\cdot\sigma_t \nabla_{\mathbf x}\log p(\mathbf{x}_t^c).
\end{aligned}
\end{equation}
Combining \eqref{eq:a} and \eqref{eq:b} we get the first derivative:
\begin{equation}
\label{eq:first-d}
\begin{aligned}
f'(t) &= 
- \sigma_t \cdot D_{\mathbf x}\mathrm{Solver}_{t\to0}^{\emptyset}(\mathbf{x}_t^c)\cdot 
\big[ \nabla_{\mathbf x}\log p(\mathbf{x}_t^c|c) 
- \nabla_{\mathbf x}\log p(\mathbf{x}_t^c)
\big] \\
&\boxed{= 
- \sigma_t \cdot D_{\mathbf x}\mathrm{Solver}_{t\to0}^{\emptyset}(\mathbf{x}_t^c) \cdot 
\nabla_{\mathbf x}\log p(c|\mathbf{x}_t^c).}
\\
\end{aligned}
\end{equation}

To this point we must evaluate the Jacobian
\(D_{\mathbf{x}}\mathrm{Solver}_{t\to0}^{\emptyset}(\mathbf{x}_t^c)\),  
namely, how the backward solver’s \emph{output at time 0} varies under an infinitesimal perturbation of its \emph{input at time \(t\)}.

\begin{equation}\label{eq:b-jacobian}
\begin{aligned}
J_t &:=\frac{\partial}{\partial\mathbf{x}_t^c}\,
        \mathrm{Solver}_{t\to0}^{\emptyset}(\mathbf{x}_t^c)
       :=D_{\mathbf x}
        \mathrm{Solver}_{t\to0}^{\emptyset}(\mathbf{x}_t^c)
      \in\mathbb{R}^{d\times d}.
\end{aligned}
\end{equation}
We outline our derivation as follows. First, we state the definition of the backward trajectory, then linearize around that trajectory and describe a tiny perturbation with an ODE. Next, we introduce a \textit{state-transition} matrix \(J_{s,t}\). It is defined as the unique solution of a matrix ODE whose coefficients are exactly those from the linearized system. We show that the solution of our defined matrix ODE, \(J_{s,t}\), transforms any perturbation at time t to time s. Invoking the existence and uniqueness of solutions, we claim that the matrix transformation is the sole solution of the perturbation ODE equation, where the transformation itself can be defined by the matrix ODE. 

Recall that the backward $\emptyset$-class trajectory satisfies \(\mathrm{d}\mathbf{x}_s/\mathrm{d} s=v_s(\mathbf{x}_s), \mathbf{x}_t=\mathbf{x}_t^c\). Adding an infinitesimal perturbation \(\delta\mathbf{x}_s\) obeys,
\begin{equation}
\label{eq:linearized}
\begin{aligned}
    \frac{\mathrm{d}}{\mathrm{d}s}(\mathbf{x}_s+\delta\mathbf{x}_s)&=v_s(\mathbf{x}_s+\delta\mathbf{x}_s)\\
    &\approx v_s(\mathbf{x}_s)+D_\mathbf{x}v_s(\mathbf{x}_s)\delta\mathbf{x}_s,
\end{aligned}
\end{equation}
subtracting the unperturbed trajectory \(v_s(\mathbf{x_s})\), 
\begin{equation}
\frac{\mathrm{d}}{\mathrm{d}s}\delta\mathbf{x}_s=D_\mathbf{x}v_s(\mathbf{x}_s)\delta\mathbf{x}_s.
\end{equation}
Note that the above \(D_\mathbf{x}v_s(\mathbf{x}_s)\) is defined by the fixed trajectory in \eqref{eq:linearized}, which can be considered constant w.r.t. \(\delta\mathbf{x}_s\). For notation purpose, we abuse notation \(A(s):=D_{\mathbf x}v_s(\mathbf x_s)\). 

Next, define the \textit{state–transition matrix} \(J_{s,t}\in\mathbb R^{d\times d}\) by
\begin{equation}\label{eq:stm-def}
\frac{\partial}{\partial s}\,J_{s,t}=A(s)\,J_{s,t},
\qquad
J_{t,t}=I_d,
\end{equation}
where ~\eqref{eq:stm-def} guarantees a unique solution by the existence–uniqueness theorem. Let \(\delta\mathbf x_t\) be an arbitrary infinitesimal displacement at
time \(t\) and define
\(\tilde{\mathbf x}_s:=J_{s,t}\,\delta\mathbf x_t\).
Differentiating and using~\eqref{eq:stm-def},
\[
\frac{\mathrm d}{\mathrm ds}\tilde{\mathbf x}_s
     =A(s)\,J_{s,t}\,\delta\mathbf x_t
     =A(s)\,\tilde{\mathbf x}_s,
\quad
\tilde{\mathbf x}_t=I_d\,\delta\mathbf x_t=\delta\mathbf x_t.
\]
Hence \(\tilde{\mathbf x}_s\) satisfies the same linearised
ODE and initial condition as \(\delta\mathbf x_s\); by uniqueness,
\[
\delta\mathbf x_s = J_{s,t}\,\delta\mathbf x_t,
\]
which we successfully describe the perturbation from time t to time s with \(J_{s,t}\). Of special interest is the map from \(t\) to the final time \(0\): \(J_t:=J_{0,t}.\) Setting \(s=0\) in~\eqref{eq:stm-def} with \(t\) as the variable
parameter and differentiating the identity
\(J_{0,t}\,J_{t,0}=I_d\) gives
\[
\frac{\mathrm dJ_t}{\mathrm dt}
   =-A(t)\,J_t
   =-D_{\mathbf x}v_t(\mathbf x_t^c)\,J_t,
\qquad
J_0=I_d.
\]
If \(v_t(\mathbf x)=-\sigma_t\nabla_{\mathbf x}\log p(\mathbf x)\) then
\(A(t)=-\sigma_t D_{\mathbf x}^2\log p(\mathbf x_t^c)\), so
\begin{equation}\label{eq:J-ode-final}
\boxed{\;
\frac{\mathrm dJ_t}{\mathrm dt}
   =\sigma_t\,D_{\mathbf x}^2\log p(\mathbf x_t^c)\,J_t,
\qquad
J_0=I_d.}
\end{equation}
Equation~\eqref{eq:J-ode-final} is the \emph{variational (Jacobi) equation}: it uniquely propagates any perturbation vector at time \(t\) to its image at time \(0\).

\textit{Second derivative} \(f''(t)\):
\begin{equation}
\begin{aligned}
f''(t)=\frac{\mathrm{d}f'(t)}{\mathrm{d}t}
&=-\frac{\mathrm{d}}{\mathrm{d}t} \big[
\sigma_t \cdot J_t \cdot 
\nabla_{\mathbf{x}_t^c}\log p(c|\mathbf{x}_t^c) 
\big] \\
&=-\sigma_\text{max} \cdot 
J_t
\cdot 
\nabla_{\mathbf{x}_t^c}\log p(c|\mathbf{x}_t^c) \\
& \ \ \ \ \  
-\sigma_t \cdot 
\big[\ 
\underbrace{\frac{\mathrm{d}}{\mathrm{d}t}
J_t
}_{(C)}
\cdot 
\nabla_{\mathbf{x}_t^c}\log p(c|\mathbf{x}_t^c) + J_t \cdot \underbrace{\frac{\mathrm{d}}{\mathrm{d}t} 
\big(
\nabla_{\mathbf{x}_t^c}\log p(c|\mathbf{x}_t^c) 
\big)}_{(D)} \ 
\big],
\end{aligned}
\end{equation}
according to the first derivative \eqref{eq:first-d}. Term \((C)\) is the derivative of the \(J_t\) as described in \eqref{eq:J-ode-final}. Term \((D)\) is,
\begin{equation}
\label{eq:d}
\frac{\mathrm{d}}{\mathrm{d}t}\big(
\nabla_{\mathbf{x}_t^c}\log p(c|\mathbf{x}_t^c) 
\big)=D^2_{\mathbf{x}^c_t}\log p(c|\mathbf{x}_t^c) \frac{\mathrm{d}\mathbf{x}^c_t}{\mathrm{d}t}
=-D^2_{\mathbf{x}^c_t}\log p(c|\mathbf{x}_t^c)\cdot\sigma_t\nabla_{\mathbf{x}_t^c}\log p(\mathbf{x}_t^c|c),
\end{equation}
where \(\mathrm{d}\mathbf{x}_t^c /{\mathrm{d}t}\) is obtained by definition \eqref{eq:def-forward}. We can compose \(f''(t)\),
\begin{equation}
\label{eq:second-d}
\boxed{%
\begin{aligned}
f''(t)
= &-\,\sigma_{\text{max}} \ J_t \ \nabla_{\mathbf x}\log p(c|\mathbf x_t^c) \\
&-\sigma_t^2\cdot 
\Bigl[
      D_{\mathbf x}^2\log p(\mathbf x_t^c)\ J_t\
      \nabla_{\mathbf x}\log p(c|\mathbf x_t^c)
      -
      J_t\ D_{\mathbf x}^2\log p(c|\mathbf x_t^c) \ 
      \nabla_{\mathbf x}\log p(\mathbf x_t^c| c)
\Bigr] 
\end{aligned}%
}
\end{equation}

The 2nd order Taylor expansion of \(f\) and \(g\) is therefore,
\begin{equation}
\begin{aligned}
f(t) &= f(0) + tf'(0) + \frac{t^2}{2}f''(0) + o(t^2) \\
&= f(0) + t \cdot \mathbf{0} - \frac{\sigma_\text{max}t^2}{2}\nabla_\mathbf{x}\log p(c|\mathbf{x}_0^c) + o(t^2) \\
g(t) &= f(0)+\frac{f(0)-f(t)}{t}
\boxed{=f(0)+\frac{\sigma_\text{max}t}{2}\nabla_\mathbf{x}\log p(c|\mathbf{x}_0^c)+o(t^2)
}
\end{aligned}
\end{equation}
\end{proof}

% send paper to icg
\section{Random JFDL}
\label{sec:appendix-algo}

The authors in \cite{sadat2024no} introduced Independent Conditional Guidance (ICG) that enables guidance without special training of the \(\emptyset\)-class. Their approach achieved guided disribution even under unconditional sampling. Inspired by their work, we introduce Random JFDL. See Alg. \ref{alg:random-jfdl}, where the difference with Alg. \ref{alg:naive-jfdl} is highlighted in \textcolor{blue}{blue}. 
\begin{algorithm}[h]
  \caption{Random JFDL for Post-Hoc Guidance}
  \label{alg:random-jfdl}
  \begin{algorithmic}[1]
  
    \STATE \textbf{Input:} dataset $\mathcal{D}$, pre-trained CM $\psi$, weighting function $w(t)$, timesteps sampling density $p(t,r)$, total iterations $\mathrm{totalIters}$, max guidance scale $\omega_\text{max}$, gradnorm layer $\theta_\text{gn}$, gradnorm function $f(L_\text{ECT}, L_\text{JFDL} \ | \ \theta_\text{gn})$.
    
    \STATE \textbf{Init:} $\theta \leftarrow \psi$, $\mathrm{Iters} \leftarrow 0$.
    
    \WHILE{$\mathrm{Iters} < \mathrm{totalIters}$}
    
      \STATE Sample $(\mathbf{x}_0^c, c) \sim \mathcal{D}$, $t,r \sim p(t,r)$, $\mathbf{z} \sim \mathcal{N}(\mathbf{0},\mathbf{I})$, $\omega=1$;
      
      \STATE $\mathbf{x}_t^c \leftarrow \mathbf{x}_0^c + \sigma_t \mathbf{z}$; $\mathbf{x}_r^c \leftarrow \mathbf{x}_0^c + \sigma_r \mathbf{z}$; 
             
      \STATE $L_\text{ECT}(\theta) \leftarrow 
      w(t) \ d \bigl (G_{\theta}(\mathbf{x}_t^c, t, c, \omega), \ G_{\theta^-}(\mathbf{x}_r^c, r, c, \omega) \bigr)$; \hfill $\triangleright$ ECT loss

      \STATE \textcolor{blue}{Sample $c'\sim \mathcal{D};$}

      \STATE Sample $\mathbf{y}_0^c \leftarrow G_{\phi^-}(\mathbf{x}_t^c, t, c)$, \textcolor{blue}{$\mathbf{y}_0^{c',t} \leftarrow G_{\phi^-}(\mathbf{x}_t^c, t, c')$}, $\mathbf{z'} \sim \mathcal{N}(\mathbf{0},\mathbf{I})$, $\omega \sim \mathcal{U}(1,\omega_\text{max})$;

      \STATE $\mathbf{y}_t^c \leftarrow \mathbf{y}_0^c + \sigma_t \mathbf{z'}$; \textcolor{blue}{$\mathbf{y}_r \leftarrow \mathbf{y}_t^c + \{ \omega \left[ \frac{\mathbf{y}_t^c-\mathbf{y}_0^c}{\sigma_t} \right] + (1-\omega) \left[ \frac{\mathbf{y}_t^c-\mathbf{y}_0^{c',t}}{\sigma_t} \right] \} \cdot (\sigma_r-\sigma_t) $}; 

      \STATE $L_\text{JFDL}(\theta) \leftarrow 
      w(t) \ d \bigl (G_{\theta}(\mathbf{y}_t^c, t, c, \omega), \ G_{\theta^-}(\mathbf{y}_r, r, c, \omega) \bigr)$; \hfill $\triangleright$ JFDL loss

      \STATE $\lambda_\text{gn} = f(L_\text{ECT}, L_\text{JFDL} \ | \ \theta_\text{gn})$;
      \STATE $L(\theta) = L_\text{ECT}(\theta) + \lambda_\text{gn} L_\text{JFDL}(\theta)$;
              
      \STATE $\theta \gets \theta - \eta\nabla_\theta L(\theta)$;
      \STATE $\mathrm{Iters} \leftarrow \mathrm{Iters}+1$;
    \ENDWHILE
    \STATE \textbf{return} $\theta$ 
  \end{algorithmic}
\end{algorithm}

\section{Experimentals}
\label{sec:appendix-exp}
This section provides our experiment setup for the 2D toy experiments, CIFAR-10, and ImageNet 64x64. For all experiments, we adopt the diffusion VE scheme and select the $\sigma_\text{max}=80$ following previous works \cite{song2023consistency,kim2023consistency,karras2022elucidatingdesignspacediffusionbased,geng2024consistencymodelseasy}. We will disclose our code upon completion of the rebuttal cycle.

\subsection{2D Toy Dataset Setup}
\label{sec:appendix-2d-toy}
We train a DM to learn three 2D distributions, \textit{spiral}, \textit{circle}, and \textit{Gaussian blob} (see Fig. \ref{fig:toy-datasets-compare}). The DM is then used as an ODE to reconstruct the pseudo-noise for normality testing. 

\begin{figure}[htbp]
    \centering
    \begin{subfigure}[b]{0.32\textwidth}
        \centering
        \includegraphics[width=\textwidth]{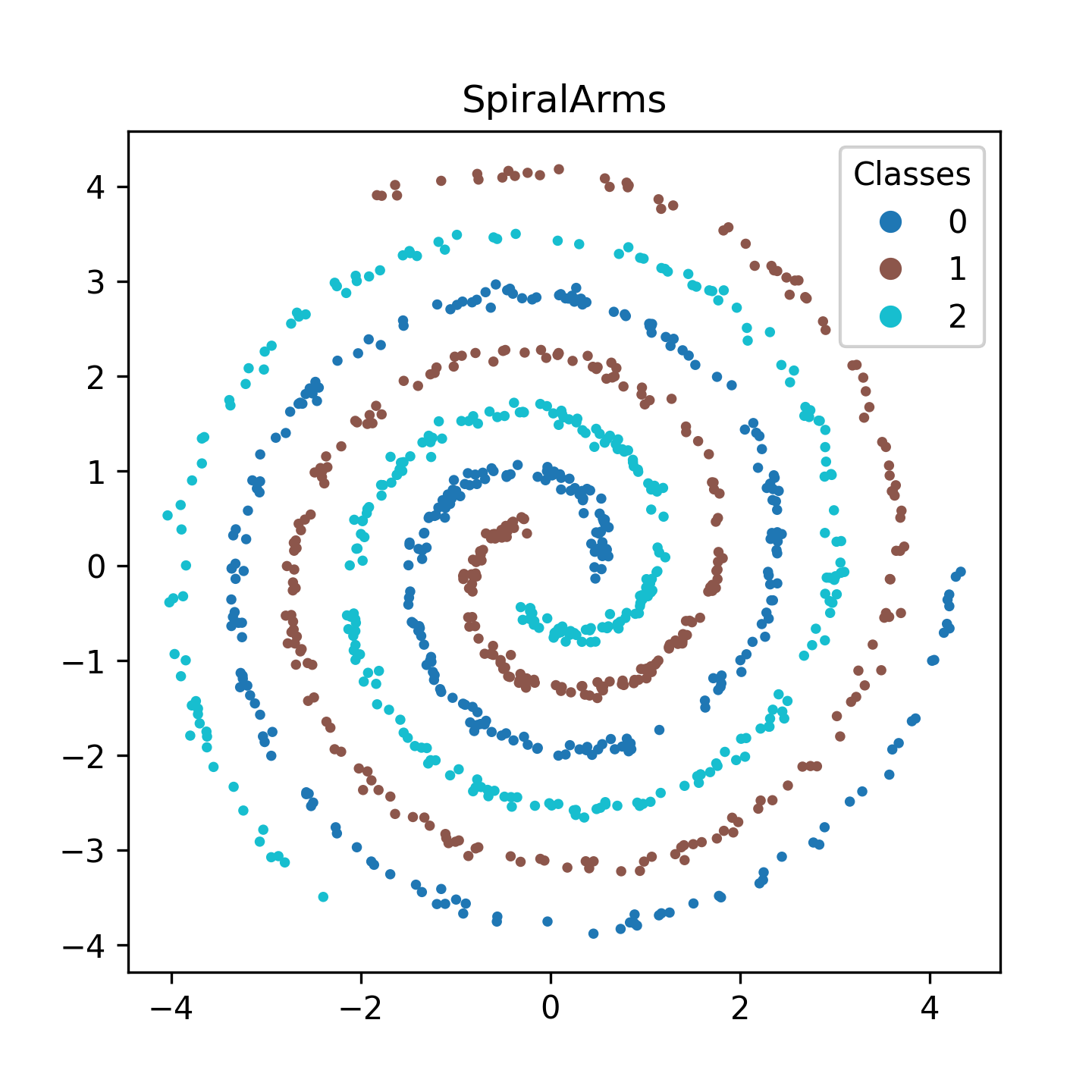}
        \caption{Spiral.}
    \end{subfigure}
    \begin{subfigure}[b]{0.32\textwidth}
        \centering
        \includegraphics[width=\textwidth]{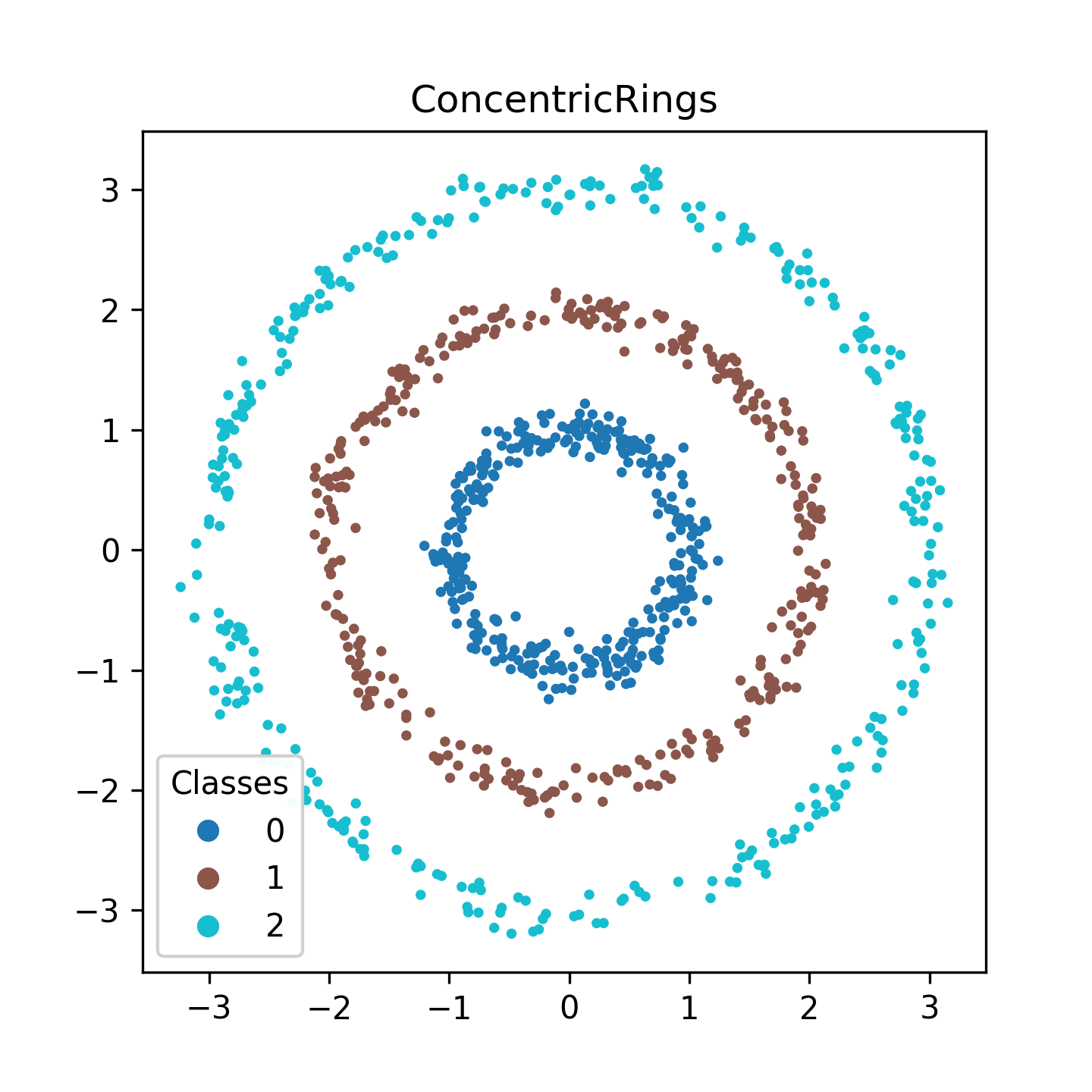}
        \caption{Circle.}
    \end{subfigure}
    \begin{subfigure}[b]{0.32\textwidth}
        \centering
        \includegraphics[width=\textwidth]{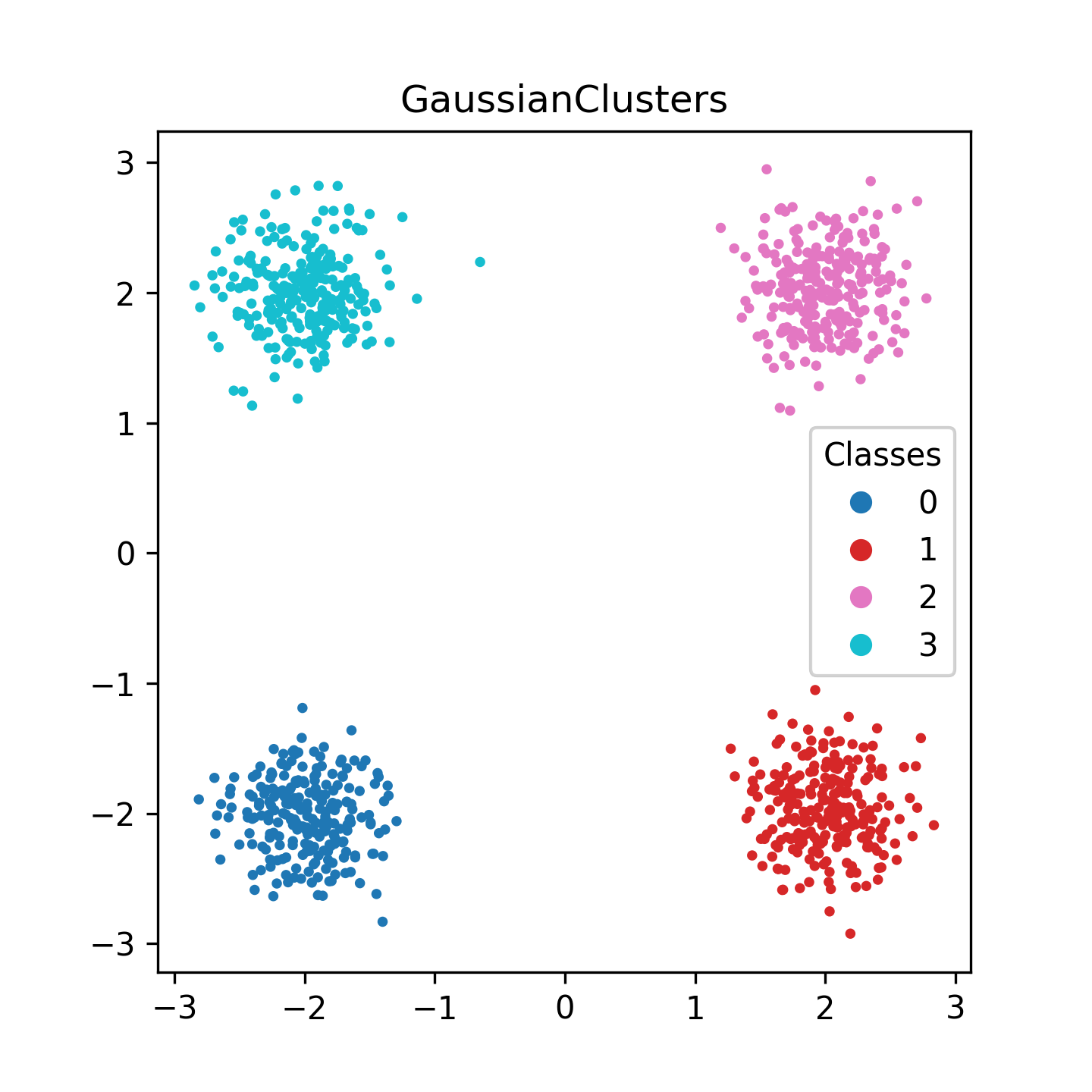}
        \caption{Gaussian blob.}
    \end{subfigure}
    \caption{Three synthetic 2D toy distributions used for normality verification experiments, spiral (left), circle (middle), and Gaussian blob (right). We train a DM that learns the class and $\emptyset$-class ODE. We use the DM as an ODE solver to reconstruct the pseudo-noise \eqref{eq:pseudo-noise} for any t, and test whether the pseudo-noise passes the normality test.}
    \label{fig:toy-datasets-compare}
\end{figure}

\paragraph{Dataset} The following datasets are constructed with specific geometric properties. We create a dataset of 10,000 samples from the described distribution for training.
\begin{itemize}
    \item \textbf{Spiral}: This dataset consists of three distinct spiral arms, each containing approximately 3,333 samples. The geometry of each arm is defined by a varying radius ($r = 0.5 + 0.3 \times \text{theta}$) as the angle ($\text{theta}$) sweeps from 0 to $4\pi$ radians. Each arm is then offset by $2\pi/3$ radians from the previous one, creating a visually distinct spiral structure. Gaussian noise with a standard deviation ($\sigma$) of 0.05 is added to the coordinates to introduce dispersion. The `sigma\_data` for this dataset is 1.85.
    \item \textbf{Circle}: This dataset is composed of three concentric circular rings, with approximately 3,333 samples per ring. The radii of the rings are 1.0, 2.0, and 3.0 units, respectively. Points for each ring are generated by sampling an angle ($\text{theta}$) uniformly from $0$ to $2\pi$ and then adding Gaussian noise with a standard deviation ($\sigma$) of 0.1 to the radius. This creates distinct, separated circular clusters. The `sigma\_data` for this dataset is 1.53.
    \item \textbf{Gaussian blob}: This dataset features four distinct Gaussian blobs (clusters), each containing approximately 2,500 samples. The clusters are centered at the following two-dimensional mean coordinates: $(-2, -2)$, $(2, -2)$, $(2, 2)$, and $(-2, 2)$. Each cluster's points are sampled from a Gaussian distribution with a standard deviation ($\sigma$) of 0.3, resulting in dense, well-separated spherical clusters. The `sigma\_data` for this dataset is 2.0.
\end{itemize}
  
\paragraph{Architecture} The DM network architecture adopts a Fourier embedding layer to handle time-dependent noise levels, and a \texttt{nn.Embedding} layer for class conditioning. The core of the model is a ResNet comprised of \texttt{BasicBlock} modules, which first projects the concatenated input features, i.e., noisy data, time embedding, and class embedding to a hidden dimension. Finally, a 1x1 convolution maps the hidden features back to the 2D data space. We follow EDM's preconditioning equations \cite{karras2022elucidatingdesignspacediffusionbased, song2021scorebasedgenerativemodelingstochastic} to achieve unit variance at training. 

\paragraph{Normality Testing} Following the procedures in Sec. \ref{sec:exp-verification} located under the experiment verification part, first we draw a class sample $\mathbf{x}^c_0$ and a noise $\mathbf{z}$. Then, for a chosen $t$, we can generate the distribution $p_t(\mathbf{x}_t|c)$, and push it back to t=0 with the ODE Solver to create class anchor $\mathbf{y}^c_0$, unconditional anchor $\mathbf{y}^{\emptyset,t}$ pair. We use the trained DM as the ODE Solver for our toy datasets, and the trained CM as for CIFAR-10. Following \eqref{eq:pseudo-noise}, we interpolate $\mathbf{y}^c_0$, $\mathbf{y}^{\emptyset,t}$ to create the mixed signal, then amplify a newly sampled $\mathbf{z}'$ by $\sigma_\text{max}$ to synthesize the pseudo-noise distribution. 

To assess the Gaussianity of the synthesized pseudo-noise, we employ a suite of statistical tests. Specifically, we conduct the Shapiro-Wilk \cite{SHAPIRO1965}, Anderson-Darling \cite{anderson1952asymptotic}, and Kolmogorov-Smirnov tests \cite{16e7f618-c06b-3d10-8705-1086b218d827}, each at a significance level of $\alpha=0.05$. The Shapiro-Wilk test evaluates the null hypothesis that the data is drawn from a normal distribution, with a high p-value (greater than $\alpha$) indicating a failure to reject normality. The Anderson-Darling test also assesses if the data comes from a specified distribution (in our case, a normal distribution after standardization), with a statistic smaller than the critical value at $\alpha$ indicating consistency with normality. Similarly, the Kolmogorov-Smirnov test compares the empirical cumulative distribution function of the sample data with that of a reference normal distribution, where a p-value above $\alpha$ suggests that the data follows the reference distribution. For our 2D toy examples, we conduct three statistical tests on 2000 random samples, equating to a size of 4000 data after flattening. On CIFAR-10, we sample 256 images, which equates to 786432 data points after flattening. Particularly, 5000 flatten data is randomly sampled for the Shapiro-Wilk test following the standard suggested maximum sample size. We report the pass/fail of each normality test in Fig.\ref{fig:normality}, where fail means we rejected normality due to low p-value. 

In addition to these hypothesis tests, we analyze the signal-to-noise ratio (SNR) between the mixed signal and the pseudo-noise. The SNR provides a quantitative measure of the noise power compared to the signal's power, calculated as, $$10\times\log_{10}\frac{\|\text{mixed-signals}\|^2}{\|\text{pseudo-noise}-\text{mixed-signals}\|^2}.$$ 
On the 2D toy distribution, we record the maximum SNR is \(\sim\!-5\), whereas on the CIFAR-10 dataset is \(\sim\!-27\), indicating very low signal residual in the pseudo-noise. These combined metrics help to understand the extent to which the added noise dominates the underlying signal at various $t$ values.

\subsection{CIFAR-10 Experimental Setup}
\label{sec:cifar10-exp}

We describe the architecture of guidance embedding layers appended to the baseline model at JFDL fine-tuning. Executing Naive JFDL in practice requires a conditional CM that solves the $\emptyset$-class, i.e., unconditional sampling. The idea is similar to training CFG enabled DM, where we substitute the class label with the $\emptyset$ class with probability mask \(q\). Unconditional distribution is essentially learned by,
\[
L_\text{ECT}(\theta)\leftarrow w(t) d(G_\theta(\mathbf{x}^c_t,t,\emptyset),G_{\theta^-}(\mathbf{x}^c_r,r,\emptyset)).
\]
We summarize the training hyper-parameters and GPU-hours in Tab. \ref{tab:hyperparameters}. 

\paragraph{Architecture} Our base model is a UNet configured under DDPM++ \cite{ho2020denoisingdiffusionprobabilisticmodels} following ECT. After the baseline ECT converges, we append it with our guidance embedding, which consists of a Fourier embedding layer that encodes the guidance scale \(\omega\), followed by three linear layers with SiLU activations \cite{elfwing2017sigmoidweightedlinearunitsneural}. The guidance embeddings are zero-initialized to stablize training in the beginning. 

\paragraph{Training} The training of the base model follows exactly ECT with a training budget of 256 million images. The only difference is that we additionally train the baseline to do unconditional sampling, with masking probability \(q\!=\!0.2\). Compared to ECT's result, the resulting FID for conditional sampling improved to 3.41/1.92 for 1- and 2-step sampling respectively. 

For both Naive and Random JFDL fine-tuning, our results in Tab. \ref{tab:guidance-results} were trained with a budget of 19.2 million images with a batch size of 64. To simulate the scenario where the baseline CM has not learned the $\emptyset$ distribution, we turn off $\emptyset$ masking during the ECT component \(L_\text{ECT}\) at fine-tuning. Compared to the preliminary results in Fid. \ref{fig:prelim}, retaining the ECT component improved FID by \(\sim\!1\). The guidance \(\omega\) is a continuous random number uniformly sampled from \(\mathcal{U}(1,2)\) for the results in Tab. \ref{tab:guidance-results}, and \(\mathcal{U}(1,5)\) for our preliminary results in Fig. \ref{fig:prelim}. Based on the findings from Truncated CMs \cite{lee2025truncatedconsistencymodels}, we sample the noise $\sigma_t$ following a log-normal distribution, $\textit{Lognormal}(P_\text{mean},P_\text{std})$, with higher mean, \(P_\text{mean}\!=\!-0.5\), to emphasize learning of higher and broader noise scales. 

\subsection{ImageNet 64x64 Experimental Setup}
\label{sec:imgnet-exp} 
The training of our baseline CM for ImageNet 64x64 overlaps the setup of ECT largely. Similar to the $\emptyset$-class learning described in Sec. \ref{sec:cifar10-exp}, we scale down \(q\!=\!0.1\) since there are 100x more classes in the ImageNet 64x64 dataset. A summary of the training hyper-parameters can be found in Tab. \ref{tab:hyperparameters}. The ImageNet 64x64 dataset experiments we reported in Tab. \ref{tab:guidance-results}, however, was configured following the pre-processing from the EDM codebase. We later realized there exists two major pre-processing procedures for the ImageNet 64x64 dataset, one following EDM, and the other following EDM2. This probably resulted in minor discrepancies for researchers trying to \cite{li2025bidirectionalconsistencymodels,song2023improved}. reproduce FID results from previous CM models, e.g. iCT, ECT, etc. As a result, our baseline FID reported in \ref{tab:guidance-results} were slightly higher than ECT's report, 5.84/3.72 for 1-/2-step, but still really close. 

\paragraph{Architecture} The base model is a UNet following EDM2 style, with sophisticated self-normalizing layers at training to preserve the magnitudes of each activation layer \cite{karras2024analyzingimprovingtrainingdynamics}. As a result, appending guidance layers naively would easily break the pretrained weights. % Add ablation before arxiv.
To retain the pretrained results while fine-tuning JFDL, we found out that joining the guidance layers with the class layers greatly stablised training. Additionally, zero-initialization is not practical following EDM2's style, as layers will self-normalize even if all weights are initialized as 0. To simulate the zero-initializing effect, we tuned down the weight to 1e-3 at the magnitude-preserving joining layers with the class-embeddings. Precisely, the aggregation concept is as follows, 
\[
\text{MP-Sum}(\mathbf{e}_\text{cls},\mathbf{e}_\omega, \alpha) = \frac{(1-\alpha)\cdot\mathbf{e}_\text{cls}+\alpha\cdot\mathbf{e}_\omega}{\sqrt{(1-\alpha)^2+\alpha^2}},
\]
where $\mathbf{e}_\text{cls}$ and $\mathbf{e}_\omega$ denote the class and guidance embedding vectors respectively, and $\alpha$=1e-3 denotes the weighted factor at magnitude-preserving summation. Our baseline model reported in Tab. \ref{tab:guidance-results} is the smallest standard EDM2, which is known as EDM2-S or ECM-S from previous works \cite{karras2024analyzingimprovingtrainingdynamics,geng2024consistencymodelseasy}. 

\paragraph{Training} The results in Tab. \ref{tab:guidance-results} were trained with a budget of 19.2 million images with a batch size of 64. The guidance $\omega$ is uniformly sampled from \(\mathcal{U}(1,4)\). In contrast to previous works \cite{karras2022elucidatingdesignspacediffusionbased,geng2024consistencymodelseasy,karras2024analyzingimprovingtrainingdynamics}, we sample the noise $\sigma_t$ following a log-normal distribution with higher mean, $P_\text{mean}\!=\!-0.4$.

\begin{table}[t]
  \caption{Hyperparameters used for CIFAR-10 and ImageNet 64x64 experiments. Baseline represents the CM that learns the $\emptyset$ distribution, which majorly follows ECT. Both Naive and Random JFDL in Tab. \ref{tab:guidance-results} share the same training hyperparameters, where we summarize under the JFDL column. The CIFAR-10 experiments has an additional preliminary column that concludes the settings in Fig. \ref{fig:prelim}. }
  \label{tab:hyperparameters}
  \centering
  % --- Start of adjustbox ---
  \begin{adjustbox}{width=0.8\textwidth,center}
  \begin{tabular}{lccccc}
    \toprule
    Hyperparameters & \multicolumn{3}{c}{CIFAR-10} & \multicolumn{2}{c}{ImageNet 64x64} \\
    \cmidrule(lr){2-4} \cmidrule(lr){5-6}
     & Baseline & Prelim & JFDL & Baseline & JFDL \\
    \midrule
    Architecture & DDPM++ & DDPM++ & DDPM++ & ECM-S & ECM-S \\
    Learning rate & 0.0001 & 0.0001 & 0.0001 & 0.0010 & 0.0007 \\
    Learning rate decay & - & - & - & 2K & 2K \\   
    Optimizer & RAdam & RAdam & RAdam & Adam & Adam  \\
    EMA decay rate & 0.9993 & 0.9993 & 0.9993 & - & - \\
    Posthoc EMA rate & - & - & - & 0.050 & 0.100 \\
    Training iterations & 2M & 0.4M & 0.3M & 2M & 0.3M \\
    Probability q of $\emptyset$ mask & 0.2 & - & - & 0.1 & - \\
    Batch size & 128 & 128 & 64 & 128 & 64 \\
    $\omega_\text{max}$ & - & 5 & 2 & - & 4 \\
    $P_\text{mean}$ & -1.1 & -1.1 & -0.5 & -0.8 & -0.4 \\
    $P_\text{std}$ & 2.0 & 2.0 & 2.0 & 2.0 & 2.0 \\
    GPU type & a5000 & a5000 & l40s & l40s & l40s \\
    Number of GPUs, i.e., \#GPUs & 4 & 4 & 1 & 4 & 2 \\
    GPU hours, after \(\times\)\#GPUs & 69 & 47.5 & 2.5 & 87 & 26.5 \\
    Acceleration tf32 & True & True & True & True & True \\
    Dropout & 0.2 & 0.2 & 0.2 & 0.4 & 0.4 \\
    Dropout Resolution & - & - & - & 16 & 16 \\
    \bottomrule
  \end{tabular}
  \end{adjustbox}
  % --- End of adjustbox ---
\end{table}

%\subsection{Ablation details}
%\label{sec:ablation}

%\paragraph{Sampling distribution.} \textbf{TODO} c10 m=-1.1 (done) vs m=-0.5 (done); imgnet (ecm-s) m=-0.8 (done) vs m=-0.4 (done).

%\paragraph{Model size.} \textbf{TODO} imgnet (ecm-s, if have time to rerun with geng's checkpoint) (ecm-m, running now, done by 5/20) (ecm-l, queuing now on gpu-he 4 h100s). 

%\paragraph{Few step guided sampling.} \textbf{TODO} algorithm for sampling, comparison for two step -> w then w, w then 1, 1 then w, baseline 1 then 1 (unguided).  

\section{Broader Impacts}
\label{sec:impact}
Our work on post-hoc guidance for CMs advances the state of the art in fast, high-quality image generation, enabling applications in scientific visualization, accessibility, and creative content production. By reducing inference cost and tuning overhead, JFDL makes powerful generative tools more broadly available, including in resource-constrained settings. However, improving the control over image fidelity also increases risks around deepfake creation, automated misinformation, and biased outputs inherited from training data. We encourage practitioners to pair CMs with robust content authentication methods and to audit model outputs for fairness and privacy before deployment.

\section{Additional Visualizations}
\label{sec:appendix-vis}

\begin{figure}[t]
\centering
\includegraphics[width=\linewidth]{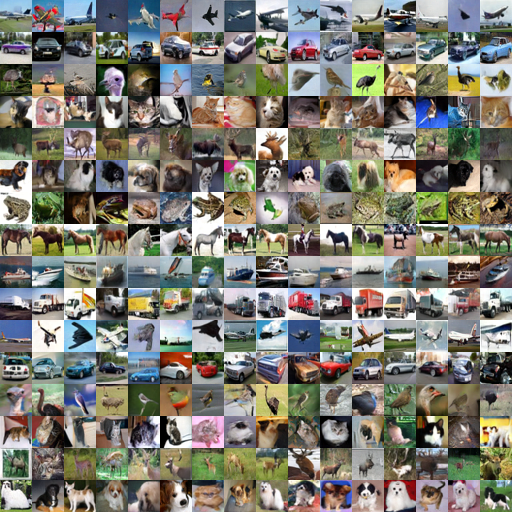}
\caption{CIFAR-10, Naive JFDL, 1-step, $\omega$=1.25}
\end{figure}

\begin{figure}[t]
\centering
\includegraphics[width=\linewidth]{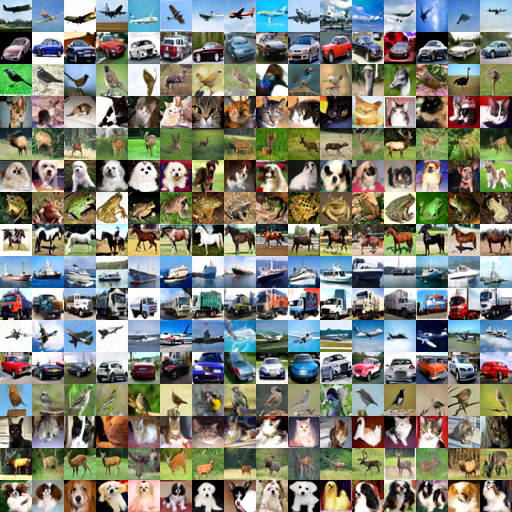}
\caption{CIFAR-10, Naive JFDL, 1-step, $\omega$=4.0}
\end{figure}

\begin{figure}[t]
\centering
\includegraphics[width=\linewidth]{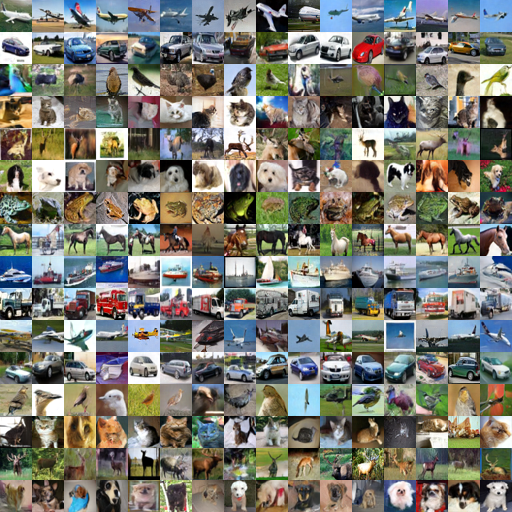}
\caption{CIFAR-10, Random JFDL, 1-step, $\omega$=1.25}
\end{figure}

\begin{figure}[t]
\centering
\includegraphics[width=\linewidth]{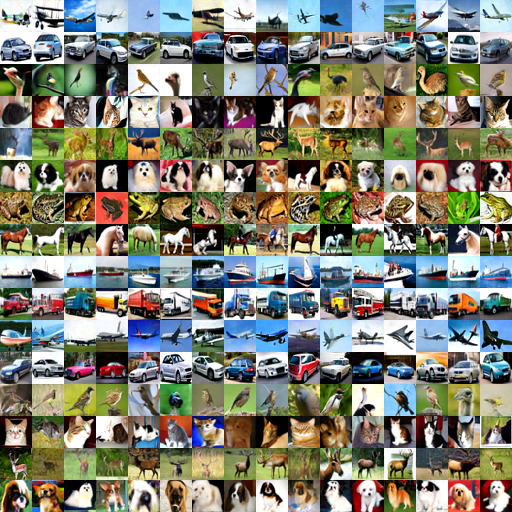}
\caption{CIFAR-10, Random JFDL, 1-step, $\omega$=4.0}
\end{figure}

\begin{figure}[t]
\centering
\includegraphics[width=\linewidth]{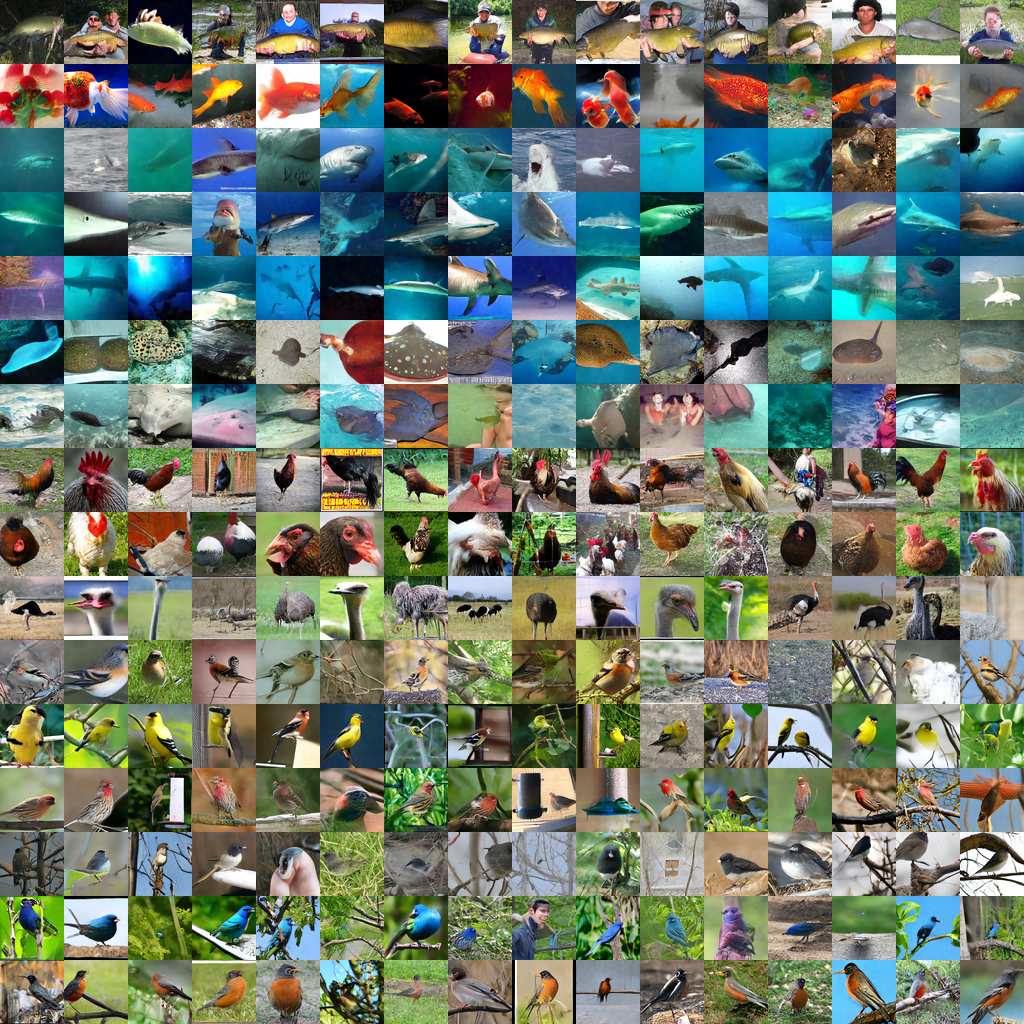}
\caption{ImageNet 64x64, Naive JFDL, 1-step, $\omega$=1.0}
\end{figure}

\begin{figure}[t]
\centering
\includegraphics[width=\linewidth]{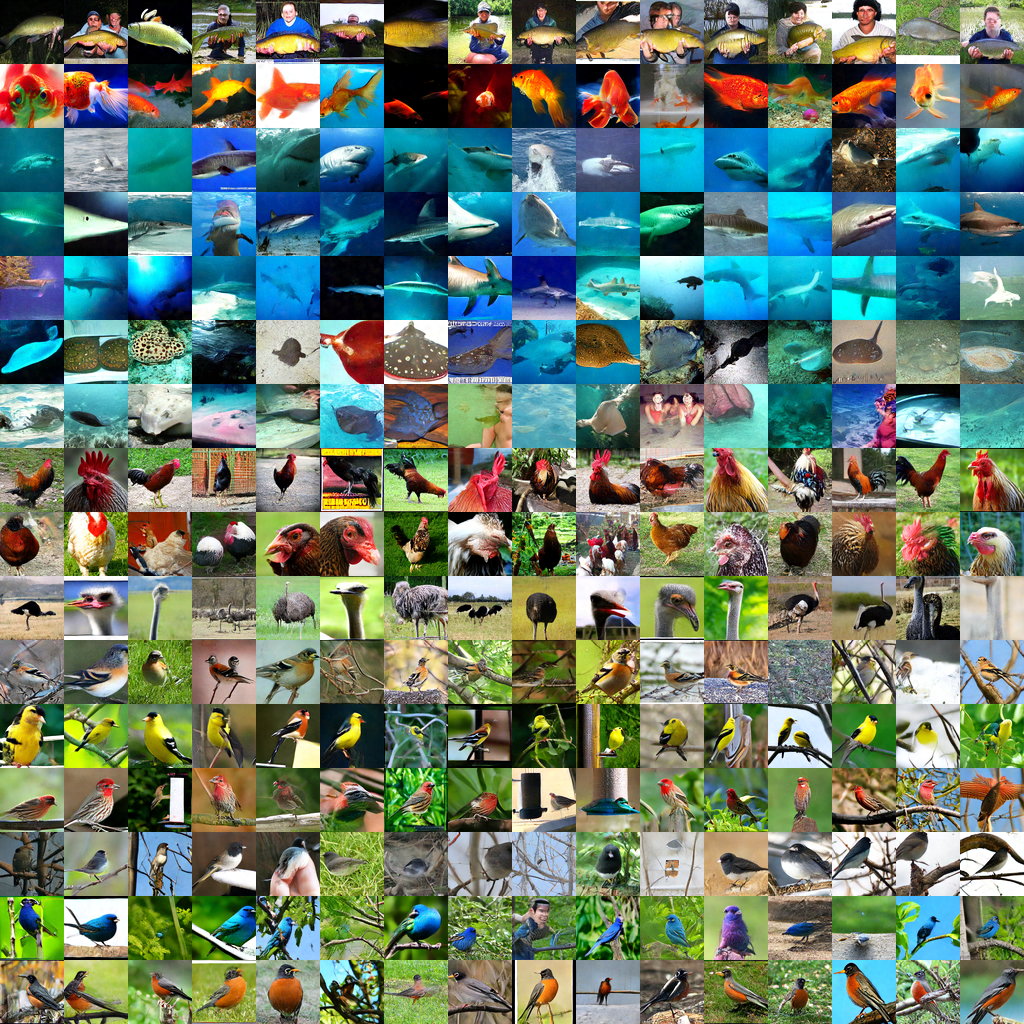}
\caption{ImageNet 64x64, Naive JFDL, 1-step, $\omega$=3.0}
\end{figure}

\begin{figure}[t]
\centering
\includegraphics[width=\linewidth]{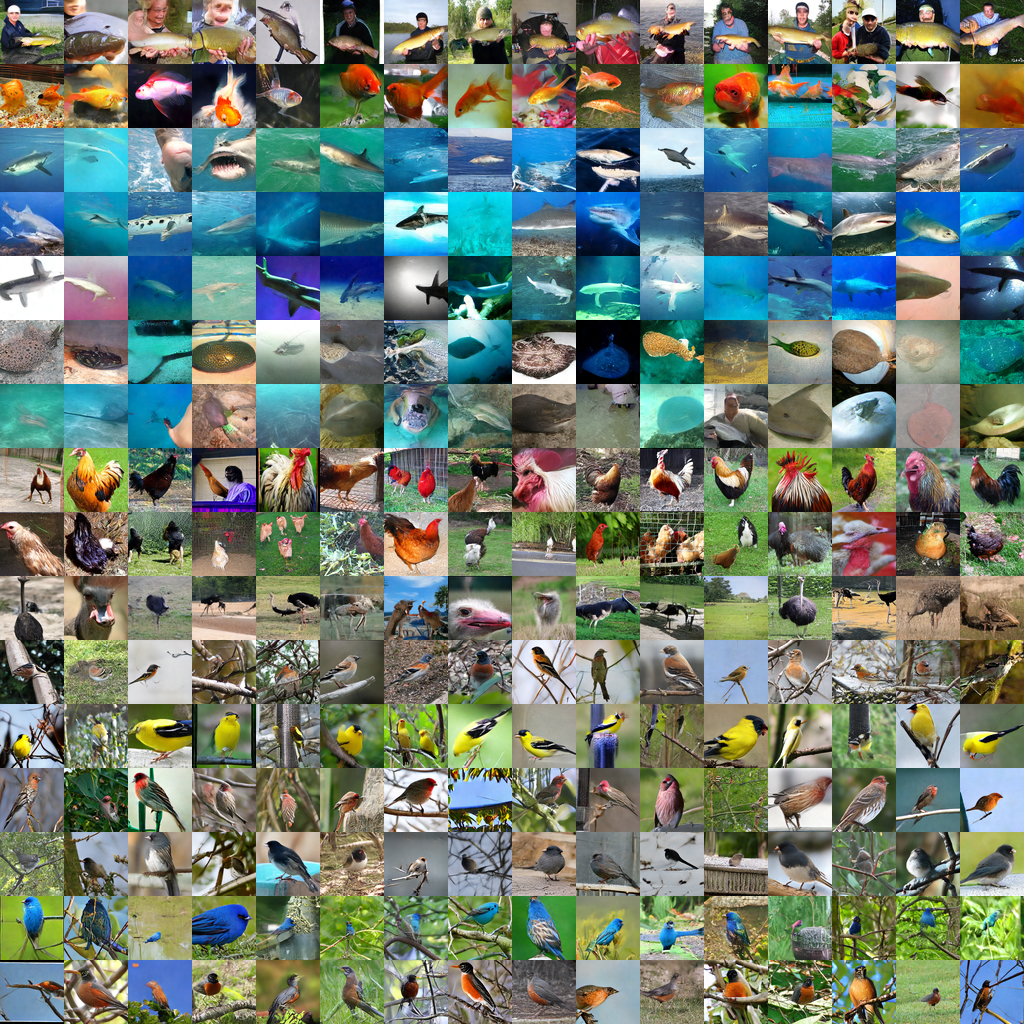}
\caption{ImageNet 64x64, Random JFDL, 1-step, $\omega$=1.0}
\end{figure}

\begin{figure}[t]
\centering
\includegraphics[width=\linewidth]{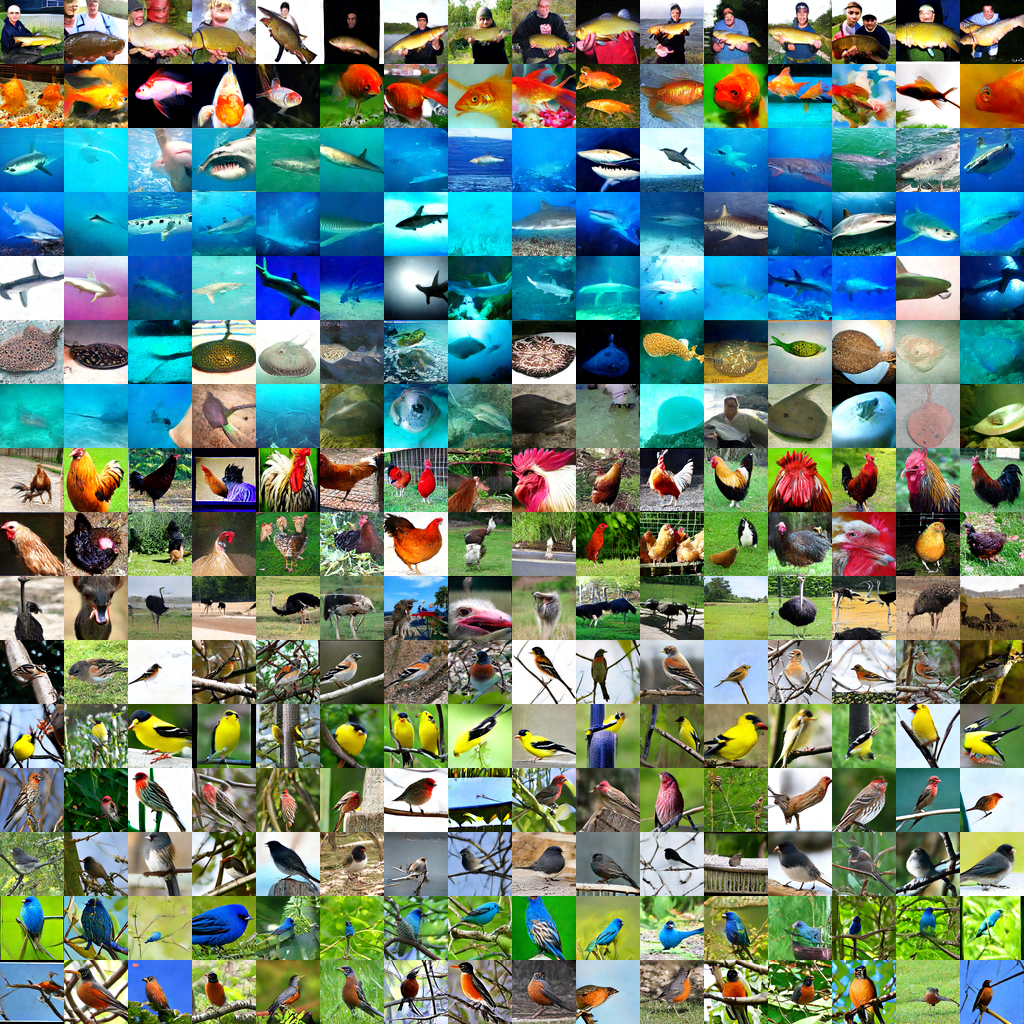}
\caption{ImageNet 64x64, Random JFDL, 1-step, $\omega$=3.0}
\end{figure}

\begin{figure}[t]
\centering
\includegraphics[width=\linewidth]{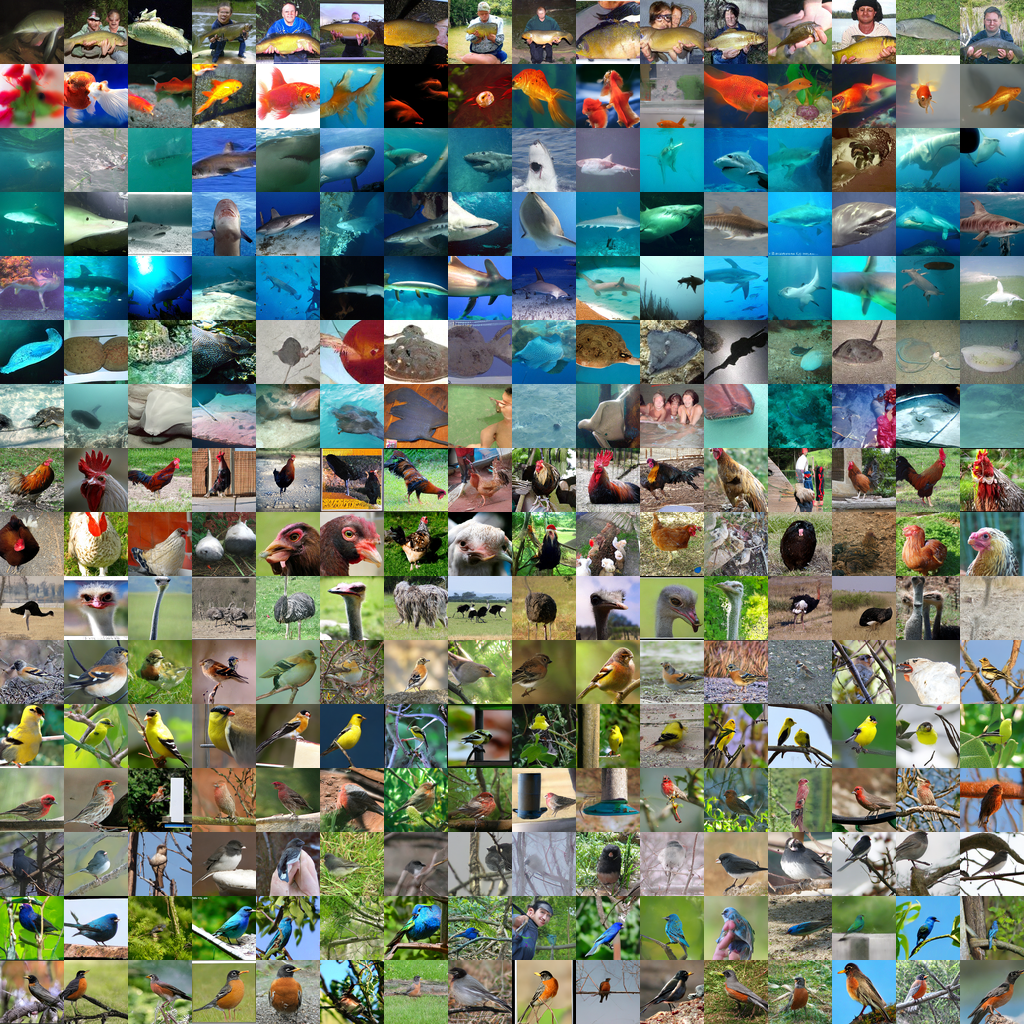}
\caption{ImageNet 64x64, Naive JFDL, 2-step, $\omega$=1.0}
\end{figure}

\begin{figure}[t]
\centering
\includegraphics[width=\linewidth]{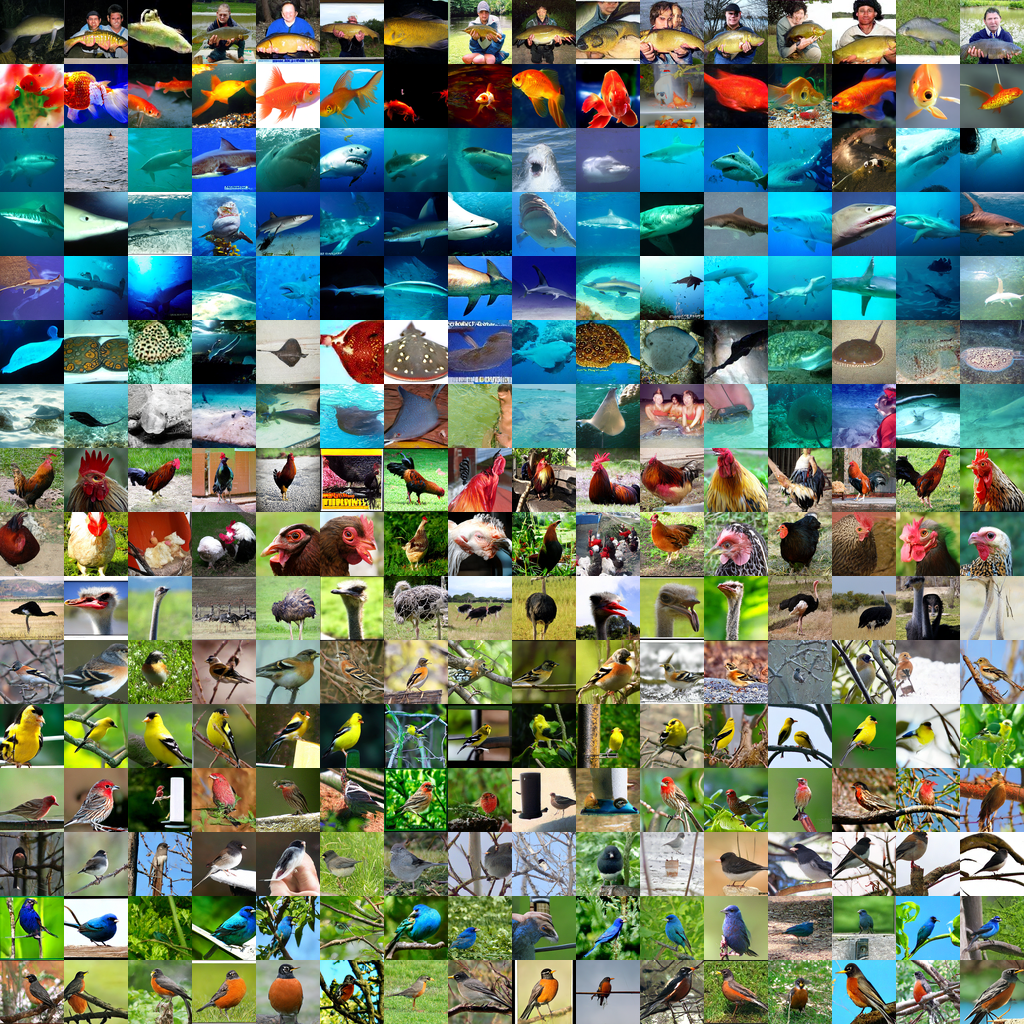}
\caption{ImageNet 64x64, Naive JFDL, 2-step, $\omega$=3.0}
\end{figure}

\begin{figure}[t]
\centering
\includegraphics[width=\linewidth]{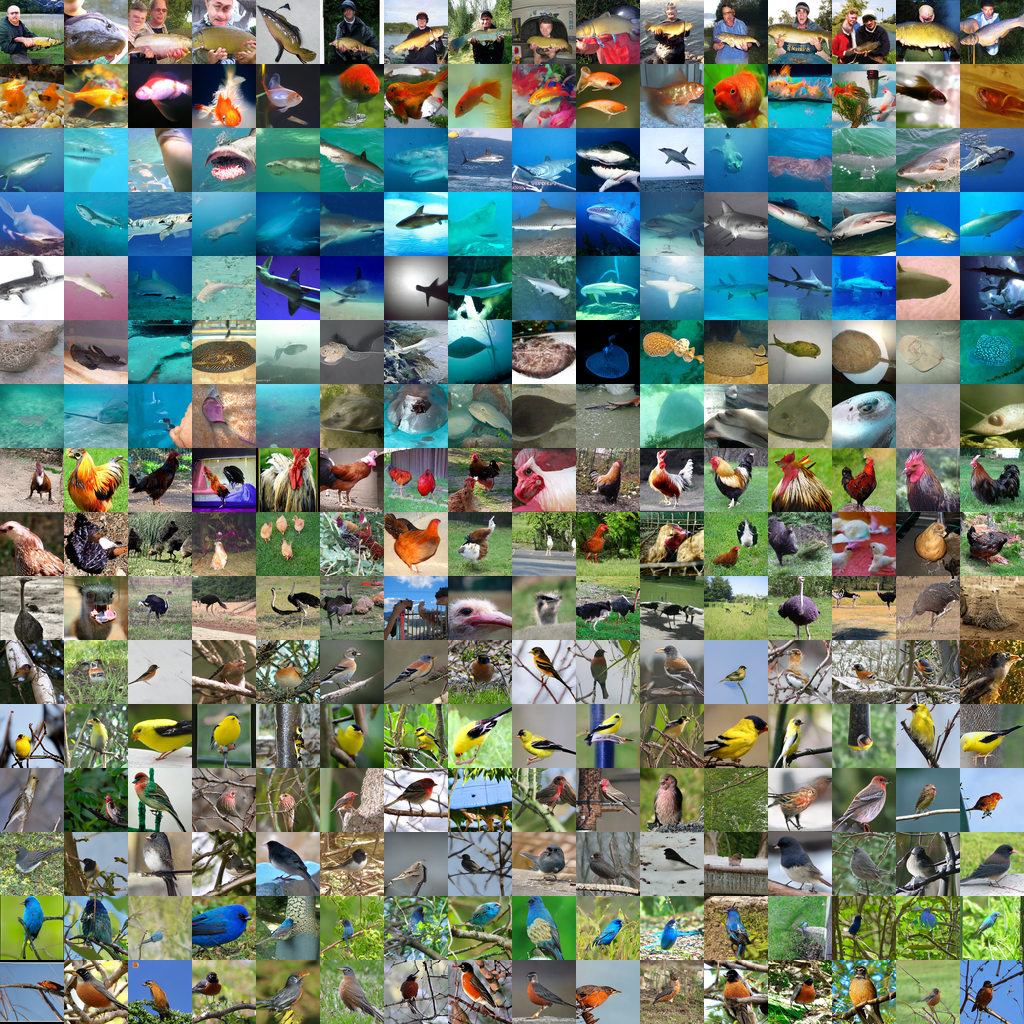}
\caption{ImageNet 64x64, Random JFDL, 2-step, $\omega$=1.0}
\end{figure}

\begin{figure}[t]
\centering
\includegraphics[width=\linewidth]{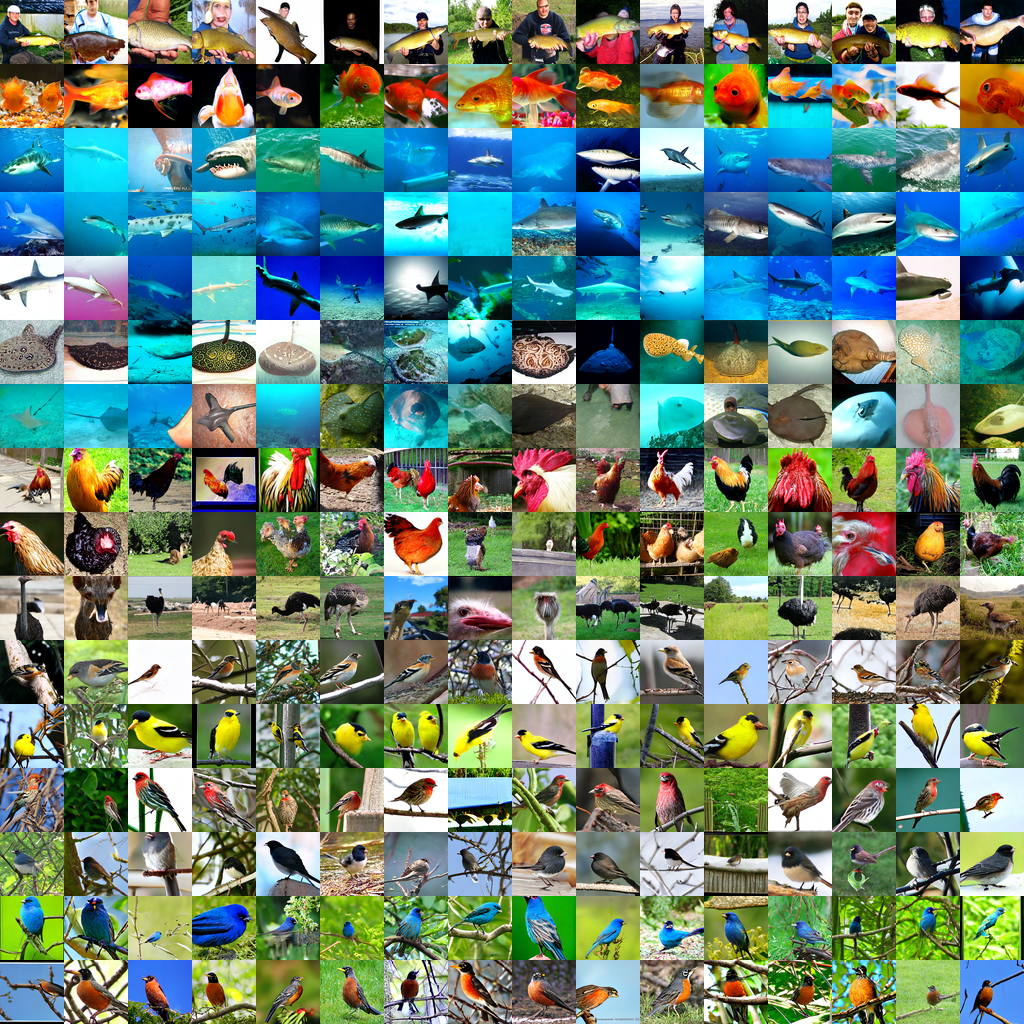}
\caption{ImageNet 64x64, Random JFDL, 2-step, $\omega$=3.0}
\end{figure}